\documentclass{article}

\usepackage{microtype}
\usepackage{graphicx}
\usepackage{subcaption}
\usepackage{booktabs} 

\usepackage{hyperref}

\usepackage[accepted]{icml2026}

\usepackage{amsmath}
\DeclareMathOperator*{\argmax}{arg\,max}
\DeclareMathOperator*{\argmin}{arg\,min}
\usepackage{amssymb}
\usepackage{mathtools}
\usepackage{amsthm}

\usepackage{placeins}

\usepackage[capitalize,noabbrev]{cleveref}

\theoremstyle{plain}
\newtheorem{theorem}{Theorem}[section]
\newtheorem{proposition}[theorem]{Proposition}

\theoremstyle{definition}
\newtheorem{definition}[theorem]{Definition}

\theoremstyle{remark}

\usepackage[textsize=tiny]{todonotes}

\usepackage{bm}
\usepackage{kotex}
\usepackage{enumitem}

\newcommand{\xvar}{\mathbf{x}}
\newcommand{\yvar}{\mathbf{y}}
\newcommand{\nvar}{\mathbf{n}}

\newcommand{\Acal}{\mathcal{A}}

\definecolor{amber}{HTML}{FFBF00}

\usepackage{multirow}

\icmltitlerunning{UOTIP: Unbalanced Optimal Transport Map for Unpaired Inverse Problems}

\begin{document}

\twocolumn[
  \icmltitle{UOTIP: Unbalanced Optimal Transport Map for Unpaired Inverse Problems}



  \icmlsetsymbol{equal}{*}

  \begin{icmlauthorlist}
\icmlauthor{Donggyu Lee}{equal,snum}
\icmlauthor{Taekyung Lee}{equal,ipai}
\icmlauthor{Jaewoong Choi}{skku}
  \end{icmlauthorlist}
  \icmlaffiliation{skku}{Sungkyunkwan University}
  \icmlaffiliation{ipai}{IPAI (Interdisciplinary Program in Artificial Intelligence, Seoul National University)}
\icmlaffiliation{snum}{Department of Mathematical Sciences, Seoul National University}

\icmlcorrespondingauthor{Jaewoong Choi}{jaewoongchoi@skku.edu}

  \icmlkeywords{Machine Learning, ICML}

  \vskip 0.3in
]



\printAffiliationsAndNotice{\icmlEqualContribution}

\begin{abstract}
We investigate unpaired image inverse problems, a challenging setting where only independent, non-paired sets of noisy measurements and clean target signals are available for training.
We propose a novel inverse problem solver based on Unbalanced Optimal Transport, called \textbf{\textit{Unbalanced Optimal Transport Map for Inverse Problems (UOTIP)}}. Our method formulates the reconstruction task—predicting clean target signals from noisy measurements—as learning a UOT Map from noisy measurement distribution to clean signal distribution by incorporating a likelihood-based cost function. By relaxing the exact marginal constraint, the UOT framework provides key advantages to our model: robustness to multi-level observation noise, adaptability to class imbalance between noisy and clean datasets, and generalizability to diverse noise-type scenarios. 
Furthermore, we theoretically demonstrate that incorporating a quadratic cost term ensures the existence and uniqueness of the transport map by satisfying the twist condition, even for ill-posed inverse problems.
Our experiments demonstrate that UOTIP achieves state-of-the-art performance on unpaired image inverse problem benchmarks, across linear and nonlinear inverse problems.
\end{abstract}

\section{Introduction} \label{sec:intro}
The \textbf{\textit{inverse problem}} refers to the problem of recovering an unknown signal $\xvar$ from incomplete or noisy measurements $\yvar$ \citep{qayyum2022untrained, daras2024survey}. 
This mathematical framework is fundamental to numerous scientific and engineering fields, including seismic imaging \citep{virieux2009overview}, weather prediction \citep{huang2005inverse}, audio signal processing \citep{lemercier2024diffusion}, and medical imaging \citep{song2021solving}.
In particular, this includes various \textbf{\textit{image reconstruction tasks}}, such as image denoising \citep{zhang2017beyond} and super-resolution \citep{rcot, dong2014learning}. 
Formally, for $\xvar \in \mathbb{R}^{n}$ and $\yvar \in \mathbb{R}^{m}$, the inverse problem can be expressed as follows: 
\begin{equation} \label{eq:inv_prob}
    \mathbf{y} = \mathcal{A} (\mathbf{x}) + \mathbf{n}.
\end{equation}
where $\Acal$ denotes a (possibly nonlinear) corruption operator and $\mathbf{n}$ represents measurement noise, which is typically assumed to be Gaussian $\mathcal{N}(0, \sigma^{2}_{\yvar} I_{m})$.

A key challenge in solving inverse problems is their \textit{ill-posedness}, where the mapping $\yvar \mapsto \xvar$ may not have a unique solution \citep{engl2015regularization}. To address this, a Bayesian approach incorporates prior knowledge about $\xvar$ through the prior distribution $p(\xvar)$, leading to a \textit{Maximum a Posteriori} (MAP) estimate. Given a measurement $\yvar_{0}$, the MAP estimate is
\begin{align} \label{eq:MAP_def}
    \hat{\mathbf{x}}_{\mathrm{MAP}}(\yvar_{0}) &= \argmax_{\mathbf{x}} p(\mathbf{x} | \mathbf{y_{0}})  \notag \\
    &= \argmin_{\mathbf{x}} \left[ -\log p(\mathbf{y}_{0} | \mathbf{x}) - \log p(\mathbf{x}) \right].
\end{align}
where $\log p(\mathbf{y}_{0} | \mathbf{x})$ represents the log-likelihood of the measurement $\mathbf{y}_{0}$ given the estimate $\mathbf{x}$.
Intuitively, the MAP estimate addresses the ill-posedness of the inverse problem by selecting plausible samples $\mathbf{x}$ from the prior distribution $p(\mathbf{x})$ that also achieve a high likelihood for producing the observation $\mathbf{y}_{0}$.
In this regard, selecting an appropriate prior distribution $p(\xvar)$ for the target (clean) signal $\xvar$ is important. 

Early approaches relied on hand-crafted priors, such as sparsity \citep{candes2008introduction},  low-rank \citep{fazel2008compressed}, and total variation \citep{candes2006robust}. However, such hand-crafted priors often fail to effectively characterize natural (desired) signals from unnatural signals \citep{ulyanov2018deep}. 
To overcome this limitation, generative models have emerged as powerful alternatives for representing complex natural signals, such as GANs \citep{otur, bora2017compressed}, VAEs \citep{goh2019solving}, Optimal Transport Map \citep{rcot, not}, and Diffusion models \citep{song2021solving, chung2022diffusion}.

We propose a novel approach that implicitly represents the prior distribution via Unbalanced Optimal Transport (UOT), a generalization of standard Optimal Transport. We refer to our model as the \textbf{\textit{Unbalanced Optimal Transport map for Inverse Problems (UOTIP)}}. Specifically, we formulate the unpaired inverse problems as learning the UOT Map from the noisy measurement distribution to the target signal distribution. Our experiments demonstrate that our model achieves state-of-the-art performance among OT-based direct transport methods on unpaired inverse problem benchmarks across linear and nonlinear inverse problems. 
Moreover, by relaxing the exact marginal-matching constraint through UOT formulation, our model attains several critical advantages: \textbf{robustness to multi-level observation noise}, \textbf{adaptability to class imbalance} between unpaired datasets, and \textbf{generalizability to diverse noise-type scenarios}. These properties make our model particularly effective in real-world settings, where strict alignment between noisy observation and clean signal data is rarely achievable. Our contributions can be summarized as follows:
\begin{itemize}[topsep=-1pt, itemsep=-1pt]
    \item We introduce the first model for unpaired inverse problems based on the Unbalanced Optimal Transport by introducing the likelihood cost function.
    \item We theoretically prove that incorporating a quadratic cost term ensures the existence and uniqueness of the OT inverse problem solver even for ill-posed inverse problems.
    \item Our model demonstrates state-of-the-art performance among OT-based direct transport methods on both linear and nonlinear inverse problem benchmarks.
    \item Our UOT formulation offers our model robustness to multi-level observation noise and adaptability to class imbalance, making it effective in real-world scenarios.
\end{itemize}

\section{Background}
\label{sec:Background}

\paragraph{Notations and Assumptions}
Let $\mathcal{X}$, $\mathcal{Y}$ be two compact complete metric spaces, and let $\mu$ and $\nu$ denote probability distributions on $\mathcal{Y}$ and $\mathcal{X}$, respectively. Both $\mu$ and $\nu$ are assumed to be absolutely continuous with respect to the Lebesgue measure.
Throughout this work, the source distribution $\mu$ and the target distribution $\nu$ represent the distributions of noisy measurements $\yvar$ and clean target signals $\xvar$, respectively.
For a measurable map $T$, $T_\# \mu$ represents the pushforward distribution of $\mu$. The set $\Pi(\mu, \nu)$ denote the set of joint probability distributions on $\mathcal{Y} \times \mathcal{X}$ whose marginals are $\mu$ and $\nu$, respectively.
Finally, given a function $f:\mathbb{R}\rightarrow [-\infty, \infty]$, its convex conjugate $f^*$ is defined as $f^{*}(y) = \sup_{x \in \mathbb{R}}\{\langle x, y \rangle - f(x)\}$.

\paragraph{Optimal Transport}
The \textit{Optimal Transport (OT)} problem seeks a cost-minimizing way to transport one probability distribution to another \citep{villani, santambrogio2015optimal}. Formally, the OT problem aims to map a source distribution $\mu \in \mathcal{P}(\mathcal{Y})$ to a target distribution $\nu \in \mathcal{P}(\mathcal{X})$ while minimizing a given cost function $c(\cdot, \cdot)$. 
The Monge's OT problem (Eq. \ref{eq:ot_monge}) involves finding a deterministic transport map $T$ such that $T_\# \mu = \nu$ \citep{monge1781memoire} (Fig. \ref{fig:combined}).
\begin{equation}\label{eq:ot_monge} 
    C(\mu, \nu) := \inf_{T_\# \mu = \nu}  \left[ \int_{\mathcal{Y} } c(\yvar,T(\yvar)) d \mu (\yvar) \right].
\end{equation}
Intuitively, Monge’s OT problem seeks an optimal transport map $T^{\star}$ that generates the target distribution $\nu$ by mapping each input $\yvar$ in a way that minimizes the transport cost $c(\yvar, T(\yvar))$. However, Monge’s formulation is non-convex, and a deterministic optimal transport map $T^{\star}$ may not exist depending on the properties of $\mu$ and $\nu$ \citep{villani, OTP}. To address these limitations, Kantorovich proposed a relaxed formulation of the OT problem \citep{Kantorovich1948}, which models transport via stochastic couplings $\pi$:
\begin{equation}\label{eq:ot_kantorovich} 
    C_{ot}(\mu, \nu) := \inf_{\pi \in \Pi(\mu, \nu )}  \left[ \int_{\mathcal{Y}\times \mathcal{X}} c(\yvar,\xvar) d \pi (\yvar,\xvar) \right].
\end{equation}
Here, $\pi \in \Pi(\mu, \nu)$ denotes a coupling of $\mu$ and $\nu$, i.e., a joint distribution with marginals $\mu$ and $\nu$. 
Unlike Monge’s OT problem, the Kantorovich formulation (Eq. \ref{eq:ot_kantorovich}) guarantees the existence of an optimal transport plan $\pi^{\star}$ under mild assumptions \citep{villani}.
Furthermore, if $\mu$ and $\nu$ are absolutely continuous with respect to the Lebesgue measure (our assumption), the optimal transport map $T^{*}$ exists and the optimal plan is induced by it, i.e., $\pi^{\star} = (Id \times T^{\star})_{\#} \mu$ \citep{villani}.

\begin{figure*}[t]
    \centering
    \begin{subfigure}[b]{0.4\linewidth}
        \centering
        \includegraphics[width=0.875\linewidth]{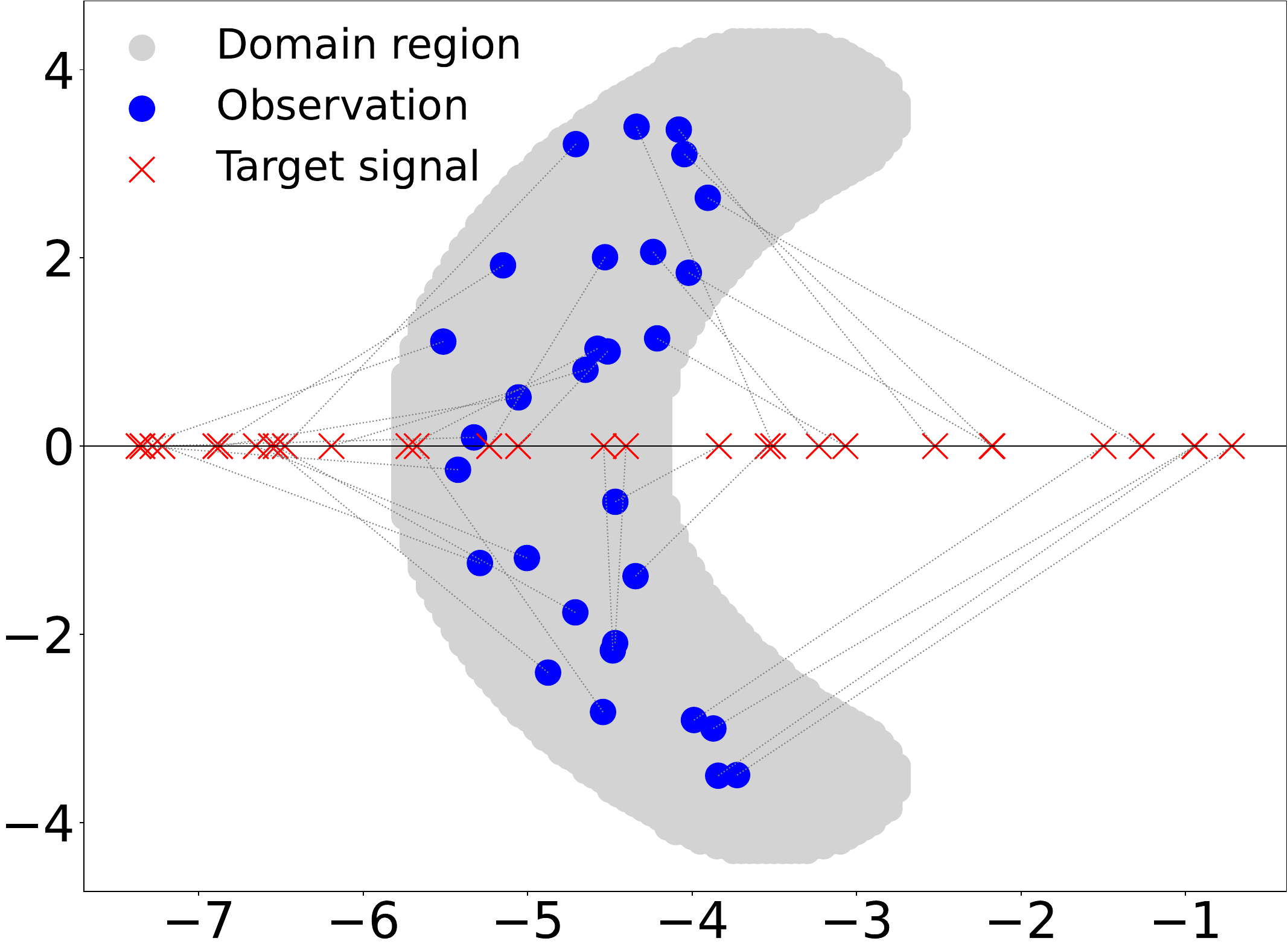}
        \caption{
        Quadratic cost $c_{q}(\yvar,\xvar) = \|\yvar - \xvar\|_{2}^{2}$ 
        }
        \label{fig:matching}
    \end{subfigure}
    \quad
    \begin{subfigure}[b]{0.4\linewidth}
        \centering
        \includegraphics[width=0.875\linewidth]{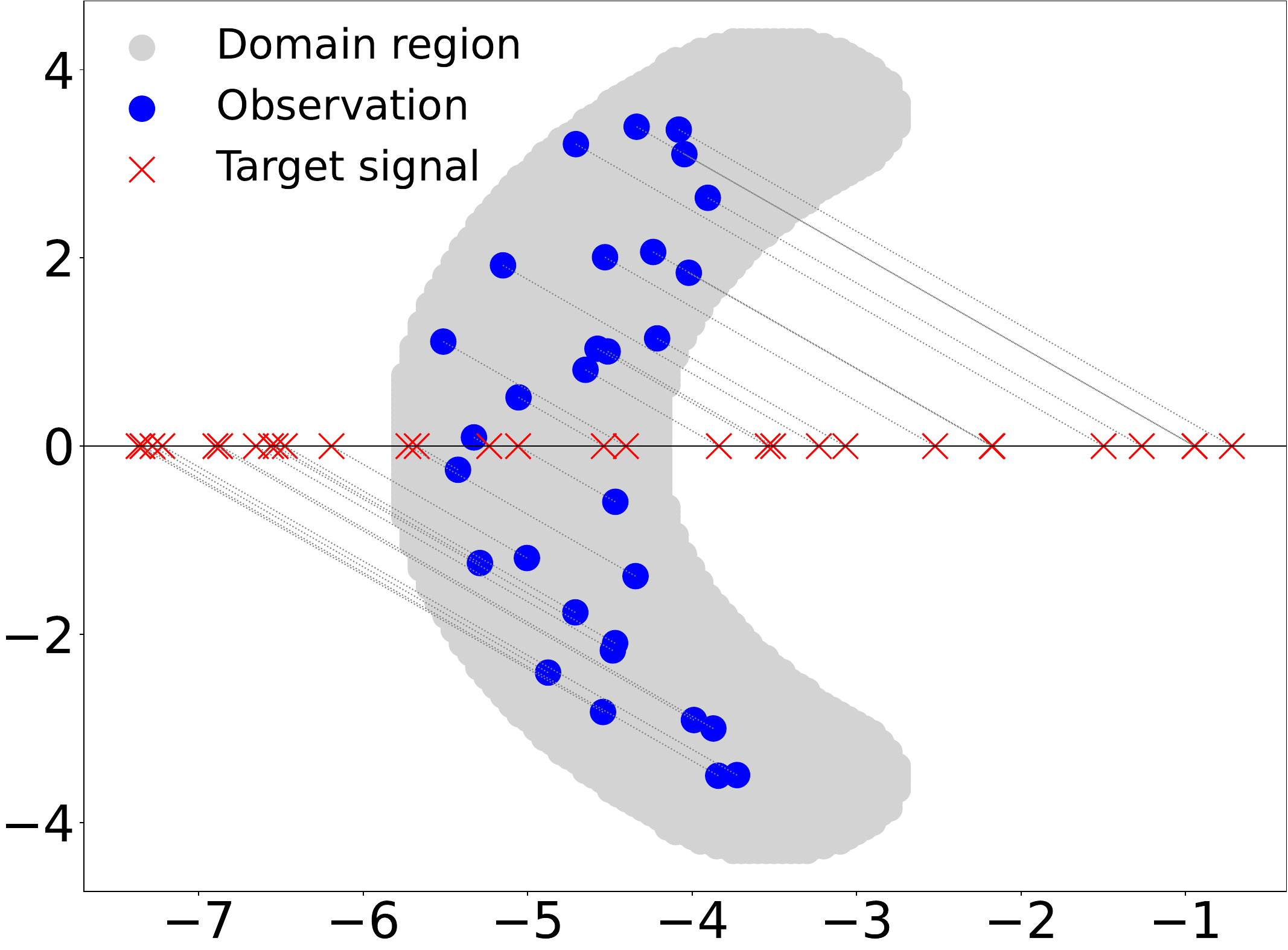}
        \caption{
        Likelihood cost $c_{l}(\yvar,\xvar) = \|\Acal(\yvar) - \xvar\|_{2}^{2}$.
        }
        \label{fig:projection}
    \end{subfigure}
    \caption{\textbf{OT Maps under different cost functions} ($\Acal$: projection onto $x$-axis along $\mathbf{d}=(1,-1)$) 
    }
    \label{fig:combined}
    \vspace{-8pt}
\end{figure*}

The goal of \textit{\textbf{Neural Optimal Transport (Neural OT)}} is to learn the optimal transport map $T^{\star}$ using neural networks. \citet{otm, fanTMLR} proposed a method based on the semi-dual formulation of OT (Eq. \ref{eq:otm}), where the potential function $v_{\phi}$ and the transport map $T_{\theta}$ are both parameterized by neural networks. Intuitively, analogous to GANs, the potential function $v_{\phi}$ and the transport map $T_{\theta}$ play a similar role to the discriminator and generator.
\begin{multline}
     \label{eq:otm}
    \mathcal{L}_{v_{\phi}, T_{\theta}} = \sup_{v_{\phi}} \biggl[ \int_{\mathcal{Y}} \inf_{T_{\theta}} \left[ c\left(\yvar,T_\theta(\yvar)\right)- v_\phi \left( T_\theta(\yvar) \right) \right] d\mu(\yvar) \\ + \int_{\mathcal{X}} v_\phi(\xvar)  d\nu(\xvar) \biggr].
\end{multline}

\paragraph{Unbalanced Optimal Transport}
The \textit{Unbalanced Optimal Transport (UOT)} problem \citep{uot1, uot2} is a generalization of the standard OT problem (Eq. \ref{eq:ot_kantorovich}) that allows flexibility in the source distribution $\mu$ and target distribution $\nu$: 
\begin{multline} \label{eq:uot}
    C_{uot}(\mu, \nu) = \inf_{\pi \in \mathcal{M}_+(\mathcal{Y}\times\mathcal{X})} \biggl[ \int_{\mathcal{Y}\times \mathcal{X}} c(\yvar,\xvar) d \pi(\yvar,\xvar) \\ + D_{\Psi_1}(\pi_0 \| \mu) + D_{\Psi_2}(\pi_1 \| \nu) \biggr], 
\end{multline} 
where $\mathcal{M}_+ (\mathcal{Y}\times\mathcal{X})$ denotes the set of positive Radon measures on $\mathcal{Y}\times \mathcal{X}$. The terms $D_{\Psi_1}$ and $D_{\Psi_2}$ are $f$-divergences induced by convex functions $\Psi_{i}$, penalizing deviations of the marginals of $\pi$ from $\mu$ and $\nu$, respectively
, i.e., $D_{\Psi_i}(\pi_{j} \| \eta) = \int \Psi_i\left( \frac{d\pi_j(x)}{d\eta(x)} \right) d\eta(x).$
Note that the standard OT enforces exact matching of the marginals, i.e., $\pi_{0} = \mu, \pi_{1} = \nu$.
In contrast, UOT relaxes this constraint by adding $f$-divergence penalties on the marginals. 
By minimizing these penalty terms, the marginals of the optimal UOT plan $\pi_{u}^{\star}$ are softly matched to $\mu$ and $\nu$, i.e., $\pi_{u, 0}^{\star} \approx \mu$ and $\pi^{\star}_{u, 1} \approx \nu$.

This relaxation provides UOT robustness to outliers \citep{uot-robust, u-notb} and the ability to handle class imbalance between marginal distributions \citep{eyring2024unbalancedness, uot-upc}. Here, the class imbalance problem refers to the scenario where the source and target distributions consist of different proportions of classes. 
The UOT map can be interpreted as the OT Map between rescaled marginal distributions $\pi^{\star}_{u, 0}(\yvar)={\Psi_1^*}'(-{v^\star}^c(\yvar)\footnote{where ${v}^c(\yvar)$ denotes the $c$-transform of $v$, which is defined as ${v}^c(\yvar)=\underset{\xvar\in \mathcal{X}}{\inf}\left(c(\yvar,\xvar) - {v}(\xvar)\right)$}.)\mu(\yvar)$ and $\pi^{\star}_{u,1}(\xvar)={\Psi_2^*}'(-{v^\star}(\xvar))\nu(\xvar)$, where $v^{\star}$ denotes the optimal potential (Eq. \ref{eq:uot-semi-dual})  \citep{uotm}. These rescaling factors ${\Psi_{i}^*}'$ enables the UOT Map $T^{\star}_{u}$ to naturally handle class imbalance by reweighting source samples. For example, $T^{\star}_{u}$ can make a correspondence between a mode representing 20\% of the source distribution and a mode representing 30\% of the target distribution.

\citet{uotm} introduced a Neural OT model for the UOT problem with the quadratic cost $c(\yvar,\xvar) = \| \yvar - \xvar\|_{2}^{2}$, called \textit{UOTM}, and applied this to generative modeling. We extend the Neural UOT model to inverse problems by generalizing the cost function. In Sec \ref{sec:Method}, we demonstrate how inverse problems can be naturally formulated within the Neural OT framework and present our model.

\section{Method} \label{sec:Method}
In this section, we present our UOT-based model for inverse problems, called \textbf{\textit{Unbalanced Optimal Transport Map for Inverse Problems (UOTIP)}}. The core idea of UOTIP is to formulate the inverse problem solver as an Unbalanced Optimal Transport (UOT) map from the noisy measurements $\yvar$ to the target signal $\xvar$ (Eq. \ref{eq:inv_prob}). In Sec~\ref{sec:formulation}, we introduce our UOT-based formulation of inverse problems. 
In Sec~\ref{sec:proposed_model}, we describe the learning objective, which is derived as a generalization of the vanilla Neural UOT model \citep{uotm} under the proposed cost function.

\subsection{Optimal Transport Formulation of Unpaired Inverse Problems} \label{sec:formulation}
\paragraph{Task Formulation}
In this subsection, we interpret the inverse problems solver as a Neural (Unbalanced) Optimal Transport Map $T^{\star}$ that maps the noisy measurements $\yvar$ to the target signal $\xvar$ under an appropriate cost function $c(\cdot, \cdot)$ (to be specified in Eq. \ref{eq:overall_cost}).

Our main target task is the \textbf{unpaired image inverse problems with a known corruption operator $\Acal$ and unknown noise level $\sigma_{y} >0$ (Eq. \ref{eq:inv_prob})}. (The case of an unknown corruption operator will be discussed in Sec~\ref{sec:abl_likelihood_cost}.) Formally, let $Y=\{\yvar_{i} : \yvar_{i} \in \mathcal{Y}, i=1, \cdots, M\} \ \text{ and} \
X=\{\xvar_{j} : \xvar_{j} \in \mathcal{X}, j=1, \cdots, N\}$ denote the sets of \textit{noisy measurements} and \textit{target signals (clean image)}, respectively. 
Under the unpaired setting, $Y$ and $X$ are independently sampled from the source distribution $\mu$ and the target distribution $\nu$.
Then, our goal is to learn an inverse problem solver $T$:
\begin{equation}
T:\mathcal{Y} \rightarrow \mathcal{X}, \qquad \yvar \mapsto T(\yvar)
\end{equation}
from these unpaired training data. 

To align $T$ with the MAP estimate principles, we seek a solver that simultaneously reflects the target prior and the observation likelihood:
\begin{enumerate}[topsep=-1pt, itemsep=-1pt]
    \item[(i)] The outputs of a solver should be consistent with the target signal distribution, ideally satisfying $T_{\#} \mu = \nu$.
    \item[(ii)] For any observation $\yvar_{0}$, the estimate $T(\yvar_{0})$ should maximize the log-likelihood $\log p(\yvar_{0} | \cdot )$.
\end{enumerate}
Here, the first condition (i) ensures that the predicted target signal $T(\yvar)$ is physically plausible (prior fidelity). The condition (ii) ensures the reconstruction remains faithful to the specific measurement (data fidelity).

Our main observation is that \textbf{these two conditions can be naturally interpreted through the (Unbalanced) Optimal Transport} (Eq. \ref{eq:ot_monge} and \ref{eq:uot}). Formally, in the OT problem, the OT map $T^{\star}$ is defined as the transport cost minimizer over valid transport maps:
\begin{enumerate}[topsep=-1pt, itemsep=-1pt]
    \item[(a)] Valid transport maps satisfying $T^{\star}_{\#} \mu = \nu$.
    \item[(b)] Each $\yvar$ is mapped to minimize the (overall) transport cost $\int_{\mathcal{Y} } c(\yvar,T^{\star}(\yvar)) d \mu (\yvar)$.
\end{enumerate}
Therefore, by construction, $T^{\star}$ satisfies the prior fidelity in condition (i). \textbf{Our approach is to design an appropriate cost function $c(\cdot, \cdot)$ so that the second condition (ii) is also satisfied}. Under this formulation, the Neural OT framework effectively serves as a global, unpaired MAP estimator.

\paragraph{Likelihood Cost and MAP Estimate}
Our goal is to solve inverse problems using a Neural OT model. To adapt the general OT map as an inverse problem solver, we introduce the \textbf{\textit{likelihood cost function}} that can be tailored to each inverse problem.
\begin{equation} \label{eq:like_cost}
c_{l}(\yvar, \xvar) = \| \Acal(\xvar) - \yvar\|_{2}^{2} 
\end{equation}
Note that under the assumption of Gaussian measurement noise, $c_{l}$ is proportional to the negative log-likelihood of the observation $- \log p (\yvar \mid \xvar)$. Fig.~\ref{fig:combined} illustrates how the OT map $T^{\star}$ changes under the likelihood cost and the standard quadratic cost. 
While derived from Gaussian assumptions, we experimentally demonstrate that this cost function generalizes effectively to other noise distributions, such as Laplace and Poisson noise (Sec \ref{sec:exp_advantage_uot}).

Interestingly, the Neural OT model with the likelihood cost admits a \textbf{MAP interpretation}. In particular, adopting the cost function $c(\yvar, \xvar)=- \log p (\yvar | \xvar)$ within the OT framework is equivalent to minimizing the negative log-posterior $- \log p (\xvar | \yvar)$, : 
\begin{equation}
\begin{aligned}
        C_{ot}(\mu,& \nu) = \inf_{\pi \in \Pi(\mu, \nu )}  \left[ \int_{\mathcal{Y}\times \mathcal{X}} -\log p(\xvar | \yvar) d \pi (\yvar,\xvar) \right] \\
        &= \inf_{\pi \in \Pi(\mu, \nu )}  \Bigl[ \int_{\mathcal{Y}\times \mathcal{X}} -\log p(\yvar | \xvar) - \log p(\xvar) \notag \\ 
        &\qquad\qquad\qquad\qquad\quad + \log p(\yvar) d \pi (\yvar,\xvar) \Bigr] \\
        &= \inf_{\pi \in \Pi(\mu, \nu )}  \left[ \int_{\mathcal{Y}\times \mathcal{X}} -\log p(\yvar  | \xvar ) d \pi (\yvar, \xvar) \right]
\end{aligned}
\end{equation}

The last equality holds because, for $\pi \in \Pi(\mu, \nu)$, the marginal distributions are fixed to $\pi_{0} = \mu$ and $\pi_{1} = \nu$. 
Consequently, \textbf{the OT formulation implicitly incorporates the target signal prior} $p(\mathbf{x})$ through the distributional constraints of the valid transport map.

\paragraph{Well-posedness via Quadratic Cost}
To complement the likelihood cost $c_{l}$, we incorporate the quadratic cost $c_{q}$. The resulting overall cost function is expressed as follows:
\begin{equation}\label{eq:overall_cost}
c(\yvar,\xvar)=\tau\bigl(c_l(\yvar,\xvar)+c_q(\yvar,\xvar)\bigr)
\end{equation}
where
$c_l(\yvar,\xvar)=\|\Acal(\xvar)-\yvar\|_2^2$
and
$c_q(\yvar,\xvar)=\|\yvar-\xvar\|_2^2$.
Here, $\tau$ denotes the cost intensity parameter.

This additional quadratic term serves two purposes. First, it establishes the theoretical well-posedness of the OT inverse problem solver. Inverse problems are typically ill-posed. From the OT perspective, this ill-posedness can cause the likelihood cost $c_{l}$ to violate the \textit{twist condition}, which is critical for guaranteeing the existence and uniqueness of the OT Map $T^{\star}$. Under mild assumptions, incorporating the quadratic cost $c_{q}$ ensures that the overall cost function $c(\cdot, \cdot)$ satisfies the twist condition, thereby guaranteeing the \textbf{existence and uniqueness of the OT inverse problem solver} (See Appendix \ref{app:twist_condition} for details.)\footnote{In practice, we additionally require the forward operator $\mathcal{A}$ to be differentiable because the gradient of $\mathcal{A}$ is required in the optimization process (Algorithm \ref{alg:uotip}).}.

\begin{proposition}
\label{prop:twist}
For ill-posed inverse problems such as Gaussian deblurring or HDR reconstruction, the additional quadratic cost $c_{q}$ ensures that the overall cost function $c(\cdot,\cdot)$ (Eq.~\ref{eq:overall_cost}) satisfies the twist condition, which guarantees the existence and uniqueness of the OT map. Formally, if $\mathcal{A}$ is $L$-Lipschitz continuous, the cost function $c_l(\mathbf{y},\mathbf{x}) + \lambda c_q(\mathbf{y}, \mathbf{x})$ satisfies the twist condition when $\lambda > L$.
\end{proposition}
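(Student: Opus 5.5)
The plan is to verify the twist condition directly from its definition and then invoke the standard existence and uniqueness theorem for OT maps under twisted costs. Since the source distribution $\mu$ lives on $\mathcal{Y}$ and the map goes $\yvar\mapsto\xvar$, the relevant twist condition is the one in the source variable: for every fixed $\yvar$, the map $\xvar \mapsto \nabla_{\yvar} c(\yvar,\xvar)$ must be injective on $\mathcal{X}$. Because $c_q(\yvar,\xvar)=\|\yvar-\xvar\|_2^2$ is only defined when both variables live in the same space, I will take $m=n$ throughout; this is implicit in Eq.~\ref{eq:overall_cost}. The positive factor $\tau$ does not affect injectivity, so it suffices to treat $c_\lambda(\yvar,\xvar)=\|\Acal(\xvar)-\yvar\|_2^2+\lambda\|\yvar-\xvar\|_2^2$.

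\emph{Step 1 (gradient computation).} The function $c_\lambda$ is a quadratic polynomial in $\yvar$, so it is smooth in $\yvar$ for every fixed $\xvar$, with no differentiability of $\Acal$ required. Its gradient is
\begin{equation*}
\nabla_{\yvar} c_\lambda(\yvar,\xvar)=2(1+\lambda)\,\yvar-2\bigl(\Acal(\xvar)+\lambda\xvar\bigr).
\end{equation*}
Hence, for fixed $\yvar$, injectivity of $\xvar\mapsto\nabla_{\yvar}c_\lambda(\yvar,\xvar)$ is equivalent to injectivity of the map $G(\xvar)=\Acal(\xvar)+\lambda\xvar$. Note that this reduction no longer depends on $\yvar$.

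\emph{Step 2 (injectivity of $G$).} Suppose $G(\xvar_1)=G(\xvar_2)$. Then $\lambda(\xvar_1-\xvar_2)=-(\Acal(\xvar_1)-\Acal(\xvar_2))$, and by the $L$-Lipschitz property
\begin{equation*}
\lambda\|\xvar_1-\xvar_2\|_2\le L\|\xvar_1-\xvar_2\|_2 .
\end{equation*}
Since $\lambda>L$, this forces $\xvar_1=\xvar_2$. This is the core of the argument. It also explains why $c_l$ alone can fail the twist condition: with $\lambda=0$ we would need $\Acal$ itself to be injective, which fails for ill-posed operators such as blurring or HDR tone mapping with a nontrivial kernel or saturation.

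\emph{Step 3 (existence and uniqueness).} I will then apply the standard theorem for twisted costs, e.g.\ Villani's Theorem~10.28 (Gangbo--McCann type), which holds under the paper's standing assumptions. Those assumptions are that $\mathcal{X}$ and $\mathcal{Y}$ are compact, $\mu$ is absolutely continuous, and $c$ is continuous (hence bounded) on $\mathcal{Y}\times\mathcal{X}$. I must also check that $c$ is locally Lipschitz and locally semiconcave in $\yvar$, uniformly in $\xvar$. This holds here because $\nabla^2_{\yvar}c_\lambda=2(1+\lambda)I$ is constant and $\nabla_{\yvar}c_\lambda$ is bounded on the compact domain. The theorem then gives that any optimal plan is concentrated on the graph of $T^\star(\yvar)=(\nabla_{\yvar}c(\yvar,\cdot))^{-1}(\nabla\varphi(\yvar))$ for a $c$-concave potential $\varphi$, which is differentiable $\mu$-a.e.\ by Rademacher's theorem. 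This yields existence and uniqueness of the map.

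I expect the main obstacle to be checking the hypotheses of the cited theorem carefully rather than the twist inequality itself. In particular, I need to confirm that the twist condition is applied in the correct variable and that the semiconcavity and uniformity requirements hold on the compact domains. If the potential's gradient lands outside the image of $G$, I would restrict attention to the set where $\varphi$ is differentiable, as in the standard proof, rather than require $G$ to be surjective.
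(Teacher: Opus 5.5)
Your proposal is correct and follows the paper's proof essentially verbatim: you use the same computation $\nabla_{\mathbf{y}} c = 2(1+\lambda)\mathbf{y} - 2\bigl(\mathcal{A}(\mathbf{x})+\lambda\mathbf{x}\bigr)$ to reduce the twist condition to injectivity of $\mathcal{A}(\mathbf{x})+\lambda\mathbf{x}$, and you establish that injectivity by the same Lipschitz contradiction. The only difference is that you invoke Villani's Theorem 10.28 and check its semiconcavity hypothesis explicitly, whereas the paper cites the Fathi--Figalli theorem and simply asserts that its remaining hypotheses hold.
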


Second, from a practical perspective, this quadratic cost allows us to extend our model to scenarios with an unknown corruption operator (by using quadratic cost only). As shown in the ablation study in Sec~\ref{sec:abl_likelihood_cost}, our quadratic-only variant still achieves competitive performance on linear inverse problems even without explicit knowledge of the corruption operator $\Acal$.

\subsection{Proposed Model} \label{sec:proposed_model}
In this subsection, we present the learning objective of our UOTIP model and then highlight the theoretical advantages of adopting the UOT map over the standard OT map for inverse problems.

\begin{table*}[t]
    \centering
    \caption{\textbf{Quantitative results on unpaired inverse problems}: two linear (Gaussian deblurring, Super-resolution) and two nonlinear (HDR Reconstruction, Nonlinear Deblurring). The \textbf{boldface} and \underline{underlined} values indicate the best and second-best performance. 
    Our model consistently achieves strong performance, attaining the best scores in nearly all cases.
    }
    \label{tab:main_inv}
    \scalebox{0.7}{
    \begin{tabular}{cccccccc}
        \toprule
        \multirow{2}{*}{Task} & \multirow{2}{*}{Method} & \multicolumn{3}{c}{FFHQ} & \multicolumn{3}{c}{AFHQ} \\
        \cmidrule(lr){3-5} \cmidrule(lr){6-8} 
        & & PSNR ($\uparrow$) & SSIM ($\uparrow$) & FID ($\downarrow$) & PSNR ($\uparrow$) & SSIM ($\uparrow$) & FID ($\downarrow$)\\
        \midrule
        \multirow{4}{*}{\shortstack{Gaussian \\ \\ Deblurring}} 
         & NOT \citep{not} & 20.11 & 0.6035 &  52.901 & 19.99 & 0.5472 & 58.927 \\
         & OTUR \citep{otur} & \underline{23.82} & \underline{0.7106} &  \underline{24.337} & \underline{23.91} & \underline{0.6777} & \underline{30.773} \\
         & RCOT \citep{rcot} & 22.07 & 0.5492 &  123.692 & 22.34 & 0.5365 & 132.465 \\
         \cmidrule(lr){2-8}
         & UOTIP (Ours)              & \textbf{24.06} & \textbf{0.7139} & \textbf{21.210} & \textbf{24.22} & \textbf{0.6804} & \textbf{12.566} \\
        \midrule
        \multirow{4}{*}{\shortstack{Super-resolution \\ \\ 4$\times$}}
         & NOT \citep{not} & 20.13 & 0.6257 & 50.066 & 20.14 & 0.5833 & 44.252 \\
         & OTUR \citep{otur}& \underline{24.09} & \underline{0.7243} & \underline{22.751} & 24.71 & 0.7079 & \underline{19.575}  \\
         & RCOT \citep{rcot}& 24.05 & 0.6820 & 118.776 & \textbf{25.04} & \underline{0.7137} & 69.072 \\
         \cmidrule(lr){2-8}
         & UOTIP (Ours) & \textbf{24.35} & \textbf{0.7371} & \textbf{19.475} & \underline{24.97} & \textbf{0.7142} & \textbf{15.939} \\
        \midrule
        \multirow{4}{*}{\shortstack{HDR \\ \\ Reconstruction}}
         & NOT \citep{not} & 21.24 & 0.7978 &  25.842 & 23.36 & 0.8179  & 10.528 \\
         & OTUR \citep{otur} & \underline{25.32} & \underline{0.8545} & \textbf{16.458} & \underline{26.25} & \underline{0.8542} & \textbf{7.227} \\
         & RCOT \citep{rcot} & 19.26 & 0.6755 & 33.422 & 18.99 & 0.7060 & 27.767 \\
        \cmidrule(lr){2-8}
         & UOTIP (Ours) & \textbf{26.02} & \textbf{0.8642} & \underline{20.840}  & \textbf{26.40} & \textbf{0.8653} & \underline{7.897} \\
        \midrule
        \multirow{4}{*}{\shortstack{Nonlinear \\ \\ Deblurring}} 
         & NOT \citep{not} & 21.37 & 0.7373 & 43.661 & 23.03 & 0.7271  & 17.377 \\
         & OTUR \citep{otur} & \underline{26.94} & \underline{0.8594} &\underline{12.538} & \underline{26.09} & \underline{0.8253} & \underline{7.651} \\
         & RCOT \citep{rcot} & 25.14 & 0.7221 & 52.268 & 24.48 & 0.7172 & 29.902 \\
         \cmidrule(lr){2-8}
         & UOTIP (Ours) & \textbf{28.52} & \textbf{0.8841} & \textbf{11.370} & \textbf{27.74} & \textbf{0.8589} & \textbf{5.113} \\
        \bottomrule
    \end{tabular}
    }
    \vspace{-8pt}
\end{table*}

\paragraph{Learning Objective}
Our approach is to learn the UOT Map $T_{u}^{\star}$ from the noisy measurement distribution $\mu$ to the target signal distribution $\nu$ using a neural network $T_{\theta}$. To this end, we employ the UOTM framework \citep{uotm}, which is based on the semi-dual formulation of the UOT \citep{uot_semidual} (Eq. \ref{eq:uot-semi-dual}).
\begin{multline} \label{eq:uot-semi-dual}
    C_{uot}(\mu, \nu) = \sup_{v\in \mathcal{C}} \biggl[\int_\mathcal{Y} -\Psi_1^*\left(-v^c(\yvar))\right) d\mu(\yvar) \\
    + \int_\mathcal{X} -\Psi^*_2(-v(\xvar)) d\nu (\xvar) \biggr],
\end{multline}
where $v^c(\yvar)$ denotes the $c$-transform of $v$, which is defined as $v^c(\yvar)=\underset{\xvar\in \mathcal{X}}{\inf}\left(c(\yvar,\xvar) - v(\xvar)\right)$. We refer to the $v \in \mathcal{C}$ as the potential function.
Then, $T_{\theta}$ is parameterized to approximate the optimal UOT Map $T_{u}^{\star}$ as follows:
\begin{multline}\label{eq:def_T}
    T_\theta(\yvar) \in \underset{\xvar\in \mathcal{X}}{\rm{arginf}}\left[c(\yvar,\xvar)-v(\xvar)\right]
    \quad \\
    \Leftrightarrow \quad v^c(\yvar)=c\left(\yvar,T_{\theta}(\yvar) \right) - v\left(T_{\theta}(\yvar)\right),
\end{multline}
Note that this parameterization leverages the optimality condition, which is satisfied by the UOT Map $T_{u}^{\star}$ and the optimal potential function $v^{\star}$ \citep{uotm}.
Finally, by substituting $v^{c}$ from Eq. \ref{eq:uot-semi-dual} with Eq. \ref{eq:def_T} and parameterizing the potential function as $v_{\phi}$ with neural network, we obtain the following learning objective using the cost function from Eq. \ref{eq:overall_cost}:
\vspace{-8pt}
\begin{multline}\label{eq:uot-network}
\mathcal{L}_{v_{\phi}, T_{\theta}}\\
= \inf_{v_\phi} \int_{\mathcal{Y}}
\Psi_1^*\Bigl(
  -\inf_{T_\theta}\Bigl[
     c\bigl(\yvar, T_\theta(\yvar)\bigr)
     - v_\phi\bigl(T_\theta(\yvar)\bigr)
  \Bigr]
\Bigr)\, d\mu(\yvar) \\
\qquad\qquad\qquad + \int_{\mathcal{X}}
\Psi_2^*\bigl(-v_\phi(\xvar)\bigr)\, d\nu(\xvar).
\end{multline}

Here, the convex conjugates $\Psi_1^*$ and $\Psi_2^*$ are monotone increasing convex functions, determined by the marginal penalty terms in the UOT problem (Eq. \ref{eq:uot}). 
For example, if $D_{\Psi_{i}}$ is the KL divergence, then $\Psi_i^*(\cdot) = \exp (\cdot)-1$. Moreover, our UOT map objective reduces to the standard OT map variant (Eq. \ref{eq:otm}) when $\Psi_i^*(\cdot) = Identity (\cdot)$, which corresponds to selecting $\Psi_{i}$ as the convex indicator function at $\{ 1 \}$. Hence, Neural UOT serves as a generalization of the Neural OT. For the training algorithm, refer to Algorithm \ref{alg:uotip}.

\paragraph{Unbalanced OT vs. OT} 
Our goal is to solve inverse problems under an unpaired setting, where the training datasets $Y$ and $X$ are not given as paired samples. Relaxing marginal distribution constraints through UOT provides several distinctive advantages for the UOT map $T^{\star}_{u}$:

\begin{itemize}[topsep=-1pt, itemsep=-1pt]
    \item[(a$^*$)] The relaxed constraints allow the map to attain higher-likelihood regions in terms of both prior and data fidelity.

    \item[(b$^*$)] UOT naturally handles class imbalance between the source and target distributions. This is particularly useful for multi-level measurement noise $\sigma_{y}$, where multiple noisy observations $\yvar$ may correspond to a single target signal $\xvar$.

    \item[(c$^*$)] UOT improves training dynamics in Neural OT frameworks by upper-bounding the gradient norm of the potential functions, leading to better target distribution fidelity. \citep{uotmsd}.
\end{itemize}
These properties make the UOT map particularly well-suited for unpaired inverse problems. Specifically regarding (a$^*$), the relaxed constraint enables $T^{\star}_{u}$ to satisfy a softer condition (i-1) (replacing (i) in Sec \ref{sec:formulation}) by focusing more on high-density regions.
\begin{enumerate}[topsep=-1pt, itemsep=-1pt]
     \item[(i-1)]$T(\yvar)$ attains a high likelihood under the target signal distribution, i.e., $\nu (T(\yvar)) > M$ for all $\yvar$ for some threshold $M > 0$.
\end{enumerate}
For the inverse problems, condition (i-1) results in predicted signals that are more physically plausible. Furthermore, the UOT achieves a smaller transport cost compared to the OT by permitting small marginal mismatches, which enables more effective overall cost minimization.
\begin{equation}
    \int c(\yvar,\xvar) \,\, d \pi_{ot}^{\star} (\yvar,\xvar)
    \, \geq \, 
    \int c(\yvar,\xvar) \,\, d \pi_{uot}^{\star} (\yvar,\xvar).
\end{equation}
For our likelihood cost, this means higher data fidelity, ensuring $\mathcal{A}(T(\mathbf{y})) \approx \mathbf{y}$. In conclusion, the Neural UOT framework achieves a superior global MAP estimate over standard Neural OT by simultaneously improving prior and data fidelity through marginal relaxation. Given these benefits, our model is built upon the UOT Map. 

\begin{figure*}[t]
    \centering
    \includegraphics[width=0.6\linewidth]{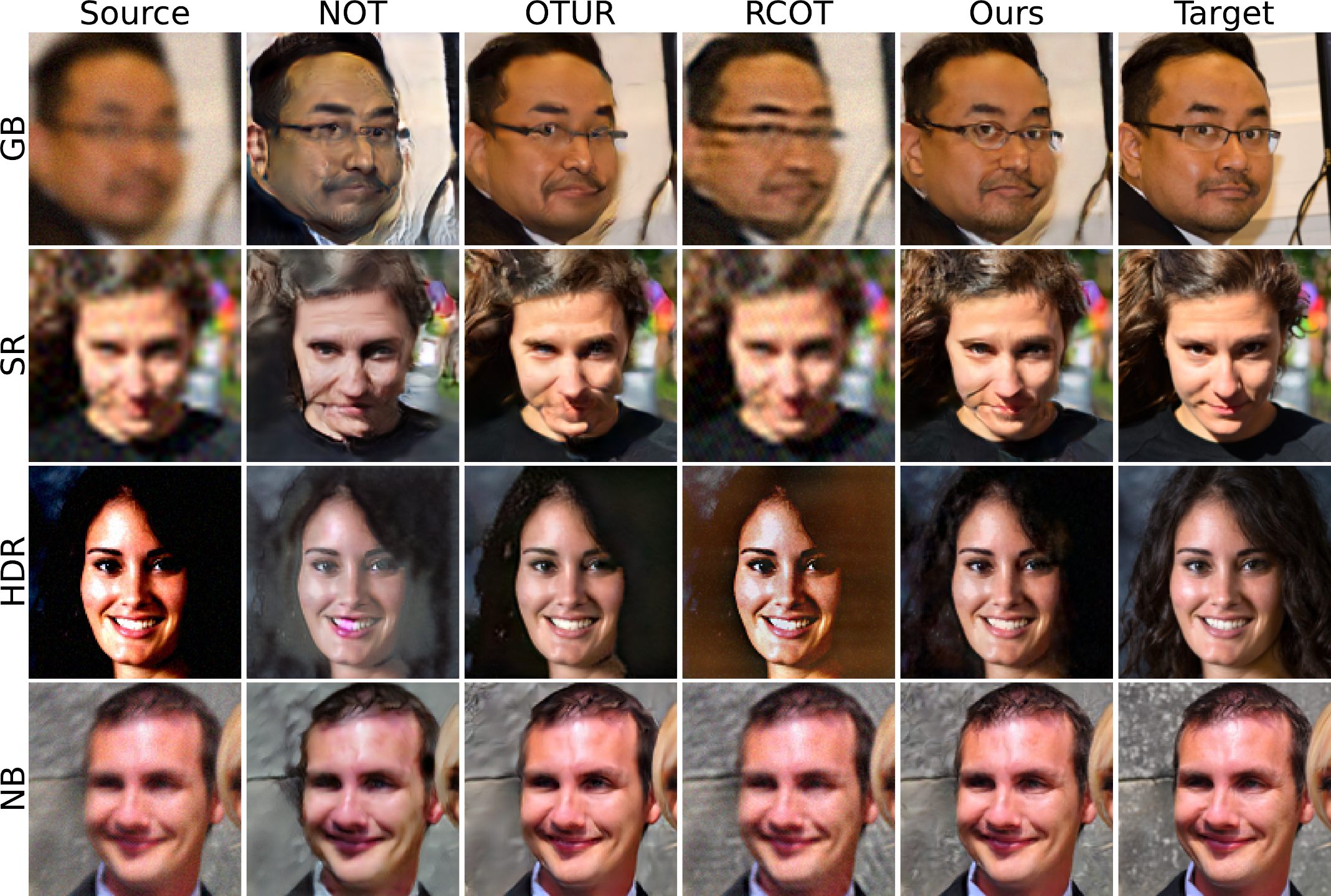}
    \caption{\textbf{Comparison of inverse problem solvers on FFHQ} for four tasks: Gaussian deblurring (GB), Super-resolution (SR), High dynamic range reconstruction (HDR), and Nonlinear deblurring (NB). Our model produces higher fidelity images with well-preserved textures.
    }
    \label{fig:FFHQ-main}
    \vspace{-8pt}
\end{figure*}

\section{Experiments}
In this section, we evaluate the performance of our model from diverse perspectives.
\begin{itemize}
    \item In Sec~\ref{sec:exp_inverse_bench}, we assess our model on four inverse problem benchmarks, including linear inverse problems and nonlinear inverse problems.
    \item In Sec~\ref{sec:exp_advantage_uot}, we investigate several advantages of our model, derived from the Unbalanced Optimal Transport formulation, such as handling multi-noise level observations, addressing the class imbalance problem, and managing diverse noise types.
    \item In Sec~\ref{sec:abl_likelihood_cost}, we conduct an ablation study on the likelihood cost function $c_{l}$ (See Appendix \ref{cost_intens_param_ablation} for additional results on the cost intensity hyperparameter $\tau$).
\end{itemize}

\paragraph{Setting} We evaluated our model on four unpaired inverse problem benchmarks: \textit{two linear (Gaussian deblurring, super-resolution)} and \textit{two nonlinear (HDR reconstruction, nonlinear deblurring)}, with additive white Gaussian noise ($\sigma_{\yvar} = 0.05$).
For the super-resolution, the quadratic cost $c_{q}(\yvar, \xvar)$ cannot be directly applied because $\yvar$ and $\xvar$ have different dimensions. Hence, we introduce bicubic interpolation $Q$ on the lower-resolution image $\yvar$ and define the modified cost as $c_{q}(\yvar, \xvar) = \| Q(\yvar) - \xvar \|_{2}^{2}$.\footnote{Note that under this modified cost, the twist condition in Prop. \ref{prop:twist} is no longer satisfied. 
Hence, the existence of the OT map is not theoretically guaranteed. 
Nevertheless, as shown in Table \ref{tab:main_inv}, UOTIP performs well on super-resolution in practice. We attribute this empirical success to the local smoothing inductive bias of the generator network architectures.}
Additional implementation details are provided in Appendix \ref{app:implementation_detail}.

We test our model on two image datasets: \textbf{FFHQ} \citep{FFHQ} and \textbf{AFHQ-dog} \citep{AFHQ}, resized to $128\times128$. For quantitative evaluation, we employ PSNR and SSIM for assessing pixel-level similarity with the ground-truth target signal, and FID for perceptual quality. We compare our model against existing approaches for unpaired inverse problems: GAN-based \textit{OTUR \citep{otur}} and OT-map–based \textit{NOT \citep{not}} and \textit{RCOT \citep{rcot}}).

\subsection{Unpaired Inverse Problem Benchmarks} \label{sec:exp_inverse_bench}
Table \ref{tab:main_inv} shows the quantitative results on four inverse problems across two datasets. Our UOTIP achieves the best performance on almost all metrics across inverse problems and datasets. Specifically, in Gaussian deblurring and nonlinear deblurring, our method consistently outperforms all other approaches across all three metrics on both datasets. In super-resolution, our method achieves the best scores except for PSNR on AFHQ, where RCOT attains a comparable PSNR score to ours. Here, our approach attains significantly better FID scores on both datasets, indicating that our method attains superior fidelity of target signals compared to other methods.

Fig. \ref{fig:FFHQ-main} shows qualitative comparisons on the FFHQ dataset (see Figure~\ref{fig:AFHQ-main} in the Appendix~\ref{app:inv_AFHQ} for AFHQ results and Appendix~\ref{app:various_inv_prob} for additional examples). The results further demonstrate our superior image fidelity. 
NOT often overemphasize features, such as contours and textures, while RCOT fails to reliably remove degradations.  OTUR effectively restores degradations but often distorts or over-smooths fine details, including eyes, mouths, and textural details. In contrast, our model produces sharper images with well-preserved textures.

\begin{table*}[t]
    \caption{\textbf{Quantitative results under multi-level observation noise} for linear inverse problems on FFHQ. Our model exhibits superior robustness across noise levels.
    }
    \label{tab:multi_noise_linear}
	\begin{subtable}{1\textwidth}
	\centering
        \scalebox{0.75}{
		\begin{tabular}{lcccccccccc}
			\toprule
			\multirow{2}{*}{Method} &\multicolumn{3}{c}{Gaussian Deblurring}  & \multicolumn{3}{c}{Super Resolution 4$\times$} \\ \cmidrule(lr){2-4}  \cmidrule(lr){5-7}
			& PSNR ($\uparrow$) & SSIM ($\uparrow$) & FID ($\downarrow$) & PSNR ($\uparrow$) & SSIM ($\uparrow$) & FID ($\downarrow$)  \\
			\midrule
			NOT \citep{not}  & 19.07 & 0.5491 & 98.558 & 18.91 & 0.5653 & 99.053 \\
			OTUR \citep{otur}  & {22.55} & {0.6417} & \underline{67.323} & {23.20} & {0.6681} & \underline{70.223}  \\
                \midrule
                OTIP (Ours-OT) & \underline{22.87} & \underline{0.6562} & 91.309 & \underline{23.21} & \underline{0.6732} & 85.674 \\
                UOTIP (Ours) & \textbf{23.04} & \textbf{0.6649} & \textbf{65.664} & \textbf{23.30} & \textbf{0.6864} & \textbf{58.406}  \\
                
		      \bottomrule
	\end{tabular}
        }
	
    \end{subtable}
\end{table*}

\begin{figure*}[t]
    \centering
    \begin{subfigure}[b]{0.43\linewidth}
        \centering
        \includegraphics[width=\linewidth]{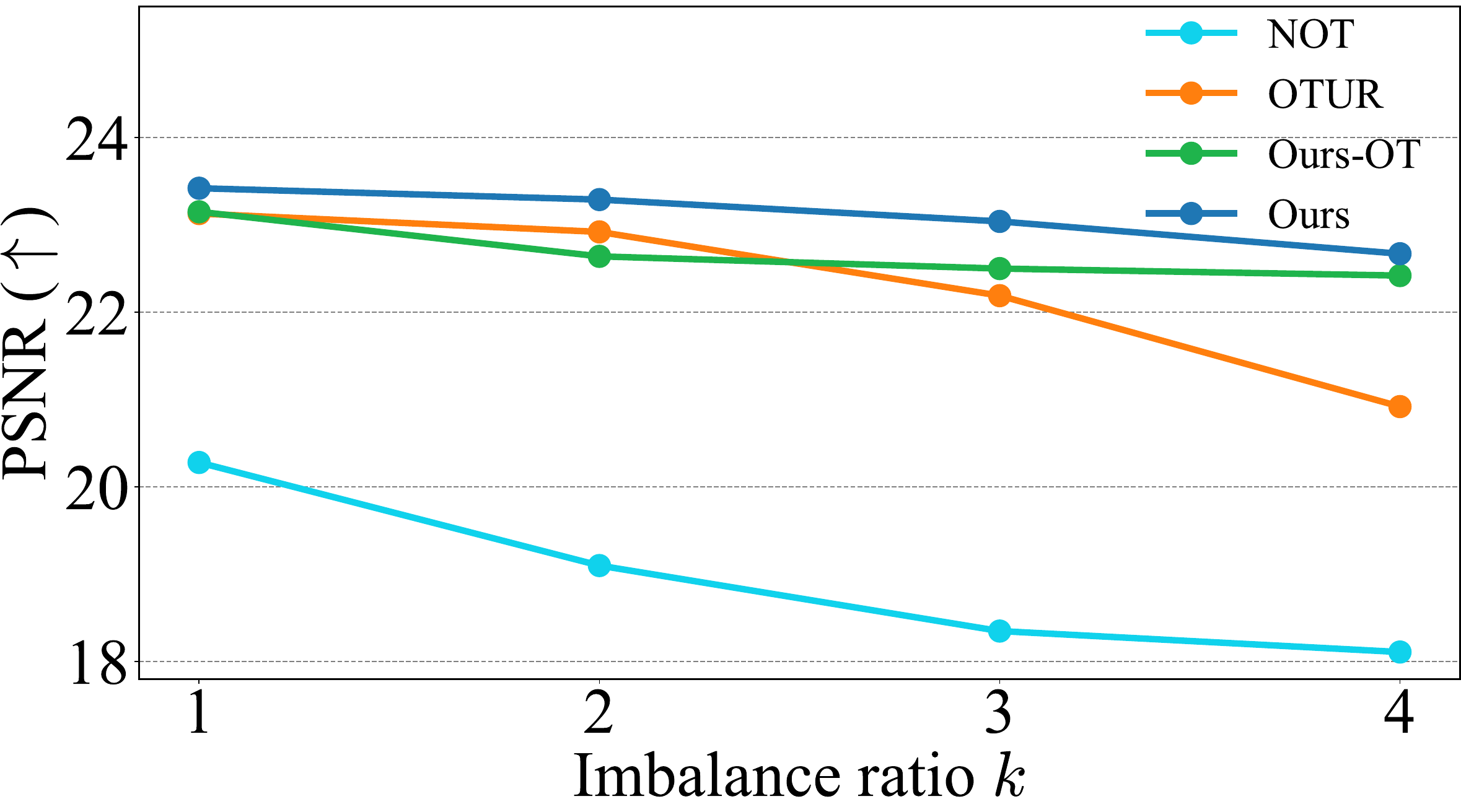}
    \end{subfigure}
    \qquad
    \begin{subfigure}[b]{0.43\linewidth}
        \centering
        \includegraphics[width=\linewidth]{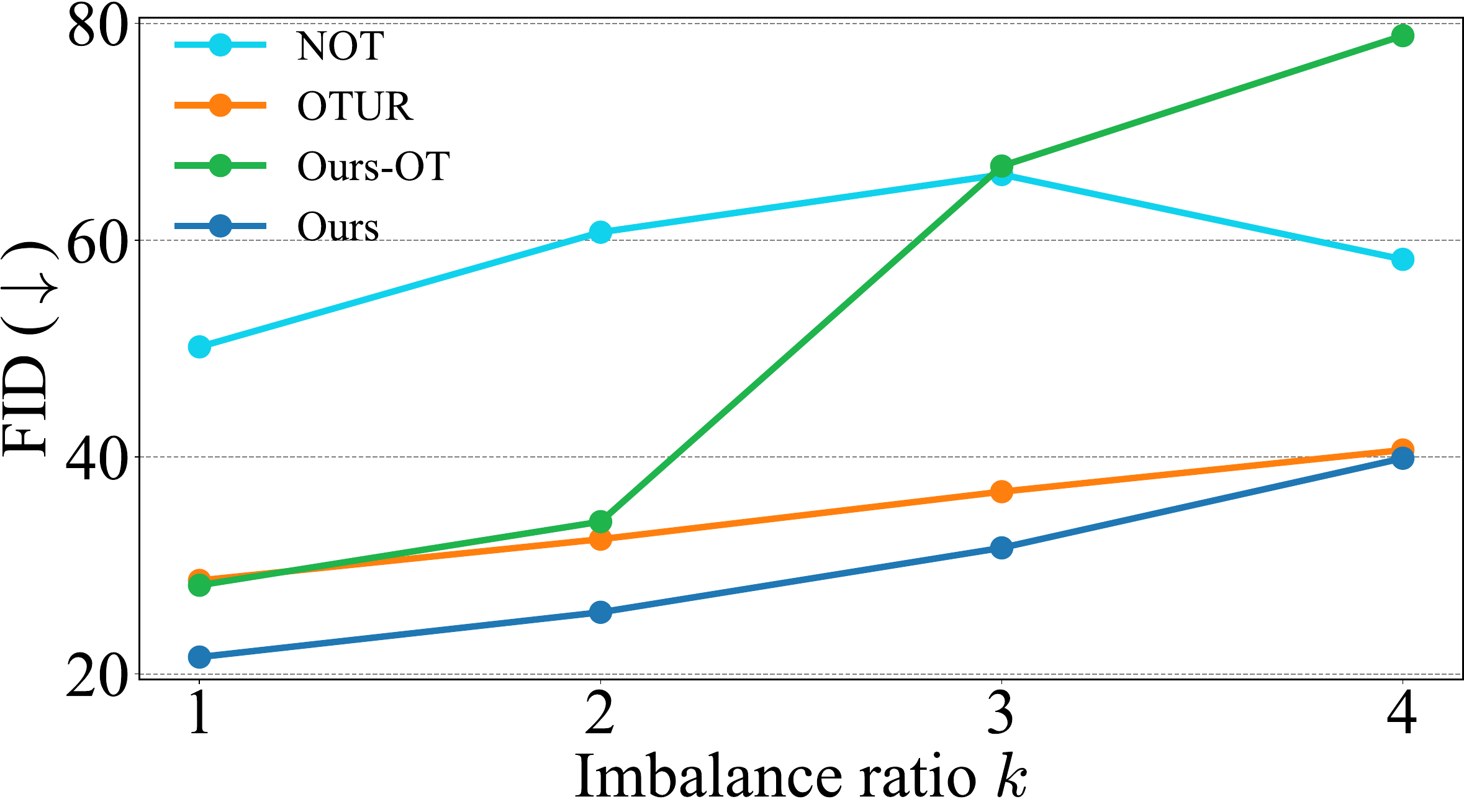}
    \end{subfigure}
    \caption{\textbf{Quantitative results under class imbalance} with imbalance ratio $k$ between AFHQ-cat and AFHQ-dog for the Gaussian deblurring (cat:dog $1:1 \rightarrow k:1$). Our method achieves superior performance and demonstrates greater robustness across various ratios.
    }
    \label{fig:class_imbalance}
\end{figure*}

\begin{figure*}[t]
    \centering
    \begin{subfigure}[b]{0.43\linewidth}
        \centering
        \includegraphics[width=\linewidth]{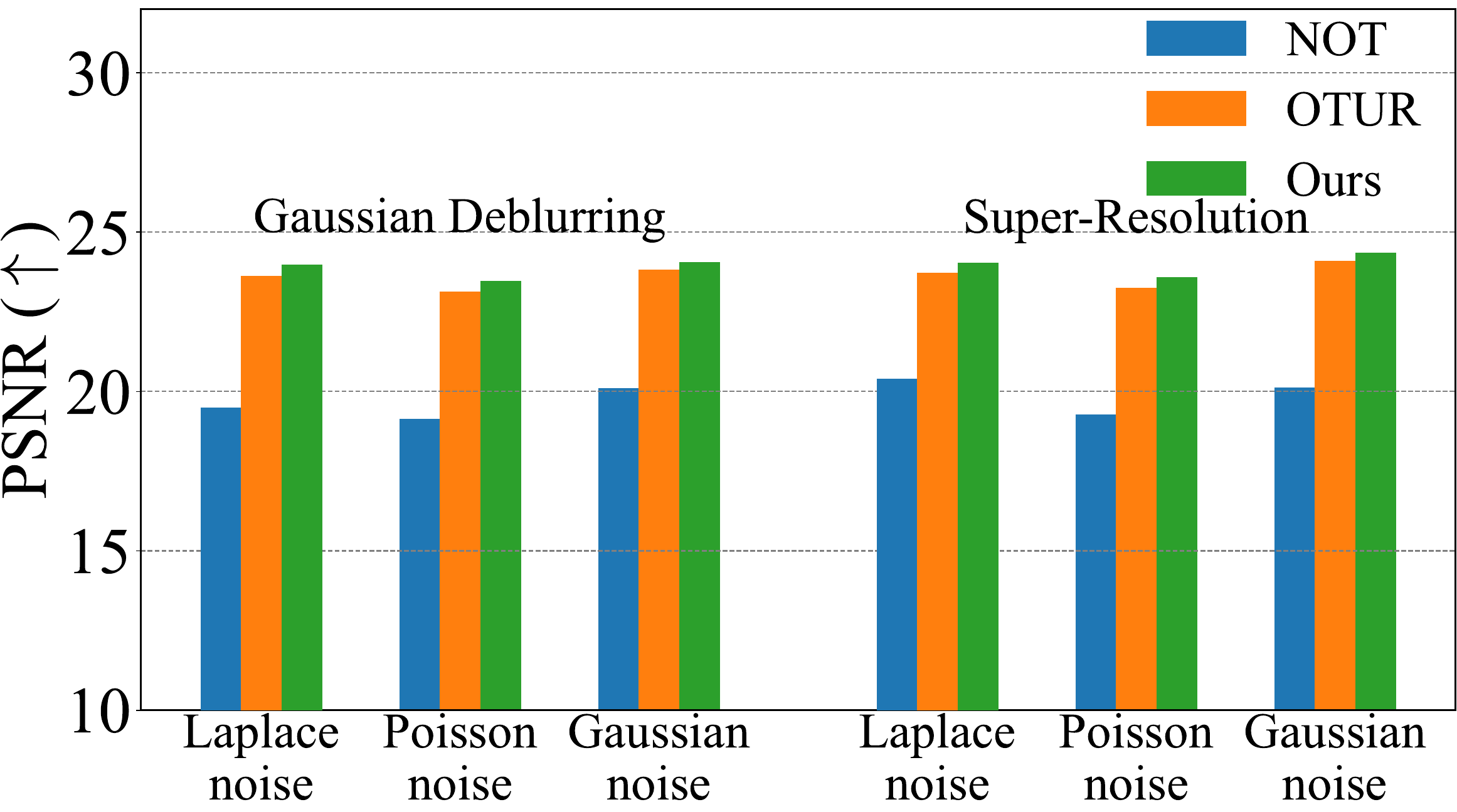}
        \caption{Generalization to diverse noise types}
        \label{fig:blind_noise_type_psnr}
    \end{subfigure}
    \qquad
    \begin{subfigure}[b]{0.43\linewidth}
        \centering
        \includegraphics[width=\linewidth]
        {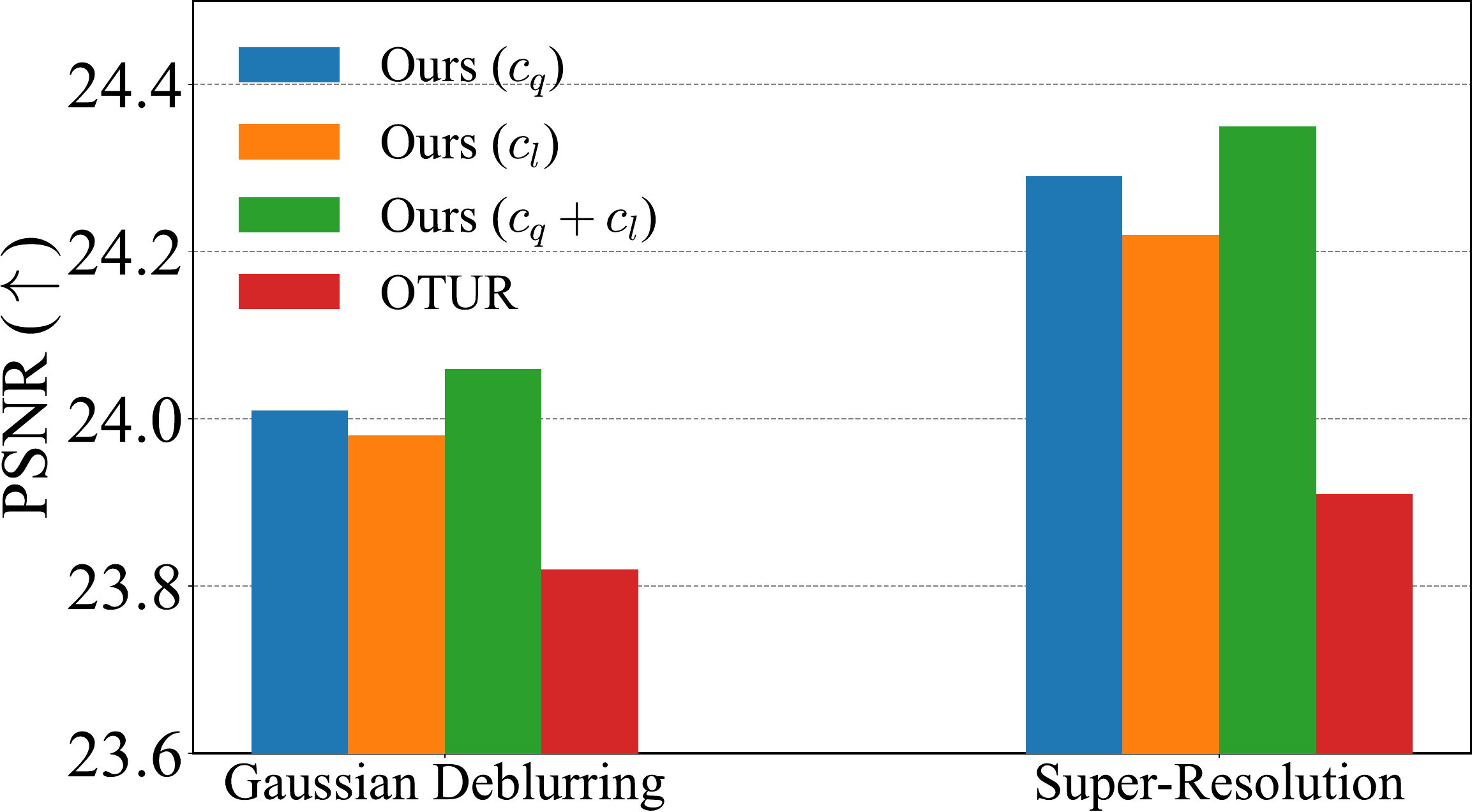}
        \caption{Ablation study on cost functions}
        \label{fig:able_cost_pslr}
    \end{subfigure}
    \caption{
    \textbf{PSNR ($\uparrow$) results for diverse noise types and cost ablation.} Complete results on all three metrics are provided in Appendix~\ref{app:add_quan_table}. In Fig.\ref{fig:blind_noise_type_psnr}, our model generalizes well across different noise types. In Fig.\ref{fig:able_cost_pslr}, the full model achieves the best overall performance, and even the blind-corruption variant (using only $c_q$) outperforms the previous state-of-the-art OTUR model.
    }
    \label{fig:blind_noise_type}
    \vspace{-8pt}
\end{figure*}

\subsection{Investigating Advantages from the (U)OT Framework} \label{sec:exp_advantage_uot}
In this subsection, we examine the benefits of our UOT formulation for unpaired inverse problems. Specifically, our UOTIP is tested under \textbf{\textit{multi-level noise observations, class imbalance settings,}} and \textbf{\textit{various noise types}}. The first two advantages stem from the unbalancedness of our UOT formulation. 
As discussed in Sec. \ref{sec:proposed_model}, unlike the standard OT Map, the UOT Map $T_{u}^{\star}$ allows rescaling of each sample $T_{u}^{\star}(\yvar)$. This flexibility provides robustness to handle varying noise levels and class imbalance in inverse problems.
For the last advantage, we evaluate whether our model can generalize to various noise types.
This robustness arises from the OT formulation itself.

\paragraph{Multi-level Observation Noise}
Most existing unpaired inverse problem solvers \citep{rcot, otur} assume a single fixed noise level $\sigma_{\yvar} > 0$. However, in realistic scenarios, \textbf{observation noise can vary across samples}. To test robustness under such conditions, we design experiments on the FFHQ dataset where noise is drawn from a mixture of four levels: $\sigma_{\yvar} \in \{0.025, 0.05, 0.1, 0.2\}$, instead of fixing $\sigma_{\yvar}=0.05$.
For each level, degraded images are independently sampled in proportions $4{:}3{:}2{:}1$, so that multiple noisy observations $\yvar \sim \mu$ may correspond to the same clean target (Fig.~\ref{fig:Multi-level noise}).
This setup evaluates whether our UOT-based formulation can effectively handle heterogeneous noise distributions. We compare UOTIP against \textit{OTUR} (best baseline in Sec \ref{sec:exp_inverse_bench}), \textit{NOT} (OT-map model), and an OT variant of our model (\textit{Ours-OT}), obtained by setting $\Psi_1(\cdot) = \Psi_2(\cdot) = \text{Identity}(\cdot)$ in Eq.~\ref{eq:uot-network}. Since robustness to multi-level noise arises only from the unbalanced formulation, for completeness, we also compare with the OT-variant of our model.

Table \ref{tab:multi_noise_linear} presents results on linear inverse problems (see Table \ref{tab:multi_noise} in Appendix \ref{app:add_quan_table} for nonlinear results and Appendix \ref{app:multi_noise} for qualitative examples). Our UOTIP achieves the best performance on all metrics. Specifically, our model outperforms OTUR (the strongest baseline in standard benchmarks) and the OT-variant of our model (OTIP). In particular, UOTIP shows a clear improvement in FID over OTIP, highlighting its superior perceptual fidelity.

\vspace{-4pt}
\paragraph{Class Imbalance}
We further evaluate UOTIP under class imbalance, where \textbf{source and target distributions are multi-modal but differ in their mode proportions}.
This situation can naturally occur in unpaired settings when measurements and clean signals are obtained from different data sources. For example, in super-resolution, large amounts of low-resolution data may be available, whereas high-resolution data may be limited. In this case, additional public high-resolution datasets may then be incorporated, leading class imbalance between modes.
To examine this, we constructed datasets by combining AFHQ-cat with AFHQ-dog \citep{AFHQ}. For each imbalance ratio $k \in \{1, 2, 3, 4\}$, we construct a target dataset $X$ of 6,000 images by mixing cat and dog samples at a $k:1$ (cat: dog) ratio. The source dataset $Y$ is fixed at a $1:1$ class ratio by downsampling cat images from $X$, and is then generated by applying the degradation/measurement operator to this subset.

Fig~\ref{fig:class_imbalance} presents the quantitative results of PSNR and FID metric. For clarity, we visualized only two metrics, because the two pixel-level measures (PSNR, SSIM) exhibited similar trends (See Table \ref{tab:class_imbalance} Appendix \ref{app:add_quan_table} for the complete table). Similar to the multi-level noise experiments, our UOT-based model consistently outperforms the baselines under class imbalance, achieving the best or at least comparable performance across all metrics. OTUR and Ours-OT achieve intermediate performance, whereas NOT mostly yields the lowest results. 

\paragraph{Diverse Noise Types}
We incorporate the likelihood cost $c_{l} (\yvar, \xvar)$, which is based on the negative log-likelihood of the Gaussian measurement noise, to guide the UOT map as our inverse problem solver. In this section, we examine the \textbf{robustness of our model under noise-type mismatch} by testing on alternative noise distributions while keeping the likelihood cost $c_l$ fixed as Gaussian. Intuitively, the UOT (or OT) map (Eq. \ref{eq:uot}) is defined as the transport cost minimizer (Condition (b)), i.e., enforcing $\Acal(T(\yvar)) \approx \yvar$, among the valid transport maps (Condition (a)) in Sec~\ref{sec:formulation}. In this respect, minimizing $c_{l}$ is expected to yield robustness across noise types, since enforcing $\Acal(T(\yvar)) \approx \yvar$ does not strictly depend on the Gaussian assumption. 
To validate this, we test our model under two alternative noise distributions: \textit{Laplace noise} and \textit{Poisson noise} (See Appendix \ref{app:implementation_detail} for details).

Fig.~\ref{fig:blind_noise_type_psnr} presents PSNR results on linear inverse problems on the FFHQ dataset. Due to page constraints, only PSNR is shown here; complete results across all three metrics are provided in Table \ref{tab:blind_noise_type} of Appendix \ref{app:add_quan_table}. Our model attains strong performance across various noise types compared to other existing approaches: NOT and OTUR. These results demonstrate the robustness of our approach in handling diverse noise conditions.

\subsection{Ablation Study on Likelihood Cost $c_{l}$} \label{sec:abl_likelihood_cost}
\vspace{-3pt}
We conduct an ablation study on our cost function $c(\cdot, \cdot)$ (Eq. \ref{eq:overall_cost}). Note that our cost consists of two terms: the \textit{problem-agnostic quadratic cost $c_{q}$} and the \textit{problem-adaptive likelihood cost $c_{l}$}. We analyze the contribution of each term by excluding each term from our cost function. The PSNR results on linear inverse problem are provided in Fig. \ref{fig:able_cost_pslr} (See Table~\ref{tab:abl_cost} in Appendix~\ref{app:add_quan_table} for full metric table). 
Our full model with both cost terms achieves the best results across most quantitative metrics. This result shows that the theoretical benefit of including the quadratic cost—namely, guaranteeing the existence and uniqueness of an OT map (Prop. \ref{prop:twist})—also leads to practical performance improvement. Interestingly, the quadratic-cost-only variant remains competitive and even surpasses OTUR.
This suggests that our quadratic-cost-only formulation has potential as an effective blind inverse problem solver when the corruption operator $A$ preserves the signal structure, such as in Gaussian deblurring.

\section{Conclusion}
We proposed UOTIP, an unpaired inverse problem solver model based on the Unbalanced Optimal Transport (UOT) formulation. We formulated the inverse problem as learning the (Unbalanced) OT Map between the distributions of noisy measurements and target signals, and introduced a novel likelihood cost function. Our model achieves competitive performance on unpaired inverse problems, outperforming existing approaches. 
Moreover, our UOT formulation provides key advantages over the standard OT formulation, such as robustness to multi-level observation noise and class imbalance.
One limitation of our work is that our method is only tested under fixed-form cost functions, without any training capacity. While our Gaussian-based likelihood cost showed some generalization to other noise types, a broader cost design could further improve performance. 


\section*{Acknowledgments}
Jaewoong was supported by the National Research Foundation of Korea (NRF) grant funded by the Korea government (MSIT) [RS-2024-00349646].

\section*{Impact Statement}
This paper presents work whose goal is to advance the field of Machine Learning. There are many potential societal consequences of our work, none which we feel must be specifically highlighted here.



\bibliography{mybib}

@article{candes2008introduction,
  title={An introduction to compressive sampling},
  author={Cand{\`e}s, Emmanuel J and Wakin, Michael B},
  journal={IEEE signal processing magazine},
  volume={25},
  number={2},
  pages={21--30},
  year={2008},
  publisher={IEEE}
}

@inproceedings{fazel2008compressed,
  title={Compressed sensing and robust recovery of low rank matrices},
  author={Fazel, Maryam and Candes, E and Recht, Benjamin and Parrilo, P},
  booktitle={2008 42nd Asilomar Conference on Signals, Systems and Computers},
  pages={1043--1047},
  year={2008},
  organization={IEEE}
}

@article{candes2006robust,
  title={Robust uncertainty principles: Exact signal reconstruction from highly incomplete frequency information},
  author={Cand{\`e}s, Emmanuel J and Romberg, Justin and Tao, Terence},
  journal={IEEE Transactions on information theory},
  volume={52},
  number={2},
  pages={489--509},
  year={2006},
  publisher={IEEE}
}

@inproceedings{ulyanov2018deep,
  title={Deep image prior},
  author={Ulyanov, Dmitry and Vedaldi, Andrea and Lempitsky, Victor},
  booktitle={Proceedings of the IEEE conference on computer vision and pattern recognition},
  pages={9446--9454},
  year={2018}
}

@article{virieux2009overview,
  title={An overview of full-waveform inversion in exploration geophysics},
  author={Virieux, Jean and Operto, St{\'e}phane},
  journal={Geophysics},
  volume={74},
  number={6},
  pages={WCC1--WCC26},
  year={2009},
  publisher={Society of Exploration Geophysicists}
}

@inproceedings{huang2005inverse,
  title={Inverse problems in atmospheric science and their application},
  author={Huang, Sixun and Xiang, Jie and Du, Huadong and Cao, Xiaoqun},
  booktitle={Journal of Physics: Conference Series},
  volume={12},
  number={1},
  pages={45},
  year={2005},
  organization={IOP Publishing}
}

@article{lemercier2024diffusion,
  title={Diffusion models for audio restoration},
  author={Lemercier, Jean-Marie and Richter, Julius and Welker, Simon and Moliner, Eloi and V{\"a}lim{\"a}ki, Vesa and Gerkmann, Timo},
  journal={arXiv preprint arXiv:2402.09821},
  year={2024}
}

@article{monge1781memoire,
  title={M{\'e}moire sur la th{\'e}orie des d{\'e}blais et des remblais},
  author={Monge, Gaspard},
  journal={Mem. Math. Phys. Acad. Royale Sci.},
  pages={666--704},
  year={1781}
}

@article{Kantorovich1948,
  title={On a problem of Monge},
  author={Kantorovich, Leonid Vitalevich},
  journal={Uspekhi Mat. Nauk},
  pages={225--226},
  year={1948}
}

@book{villani,
  title={Optimal transport: old and new},
  author={Villani, C{\'e}dric and others},
  volume={338},
  year={2009},
  publisher={Springer}
}

@article{santambrogio2015optimal,
  title={Optimal transport for applied mathematicians},
  author={Santambrogio, Filippo},
  year={2015},
  publisher={Springer}
}

@article{uot-robust,
  title={Unbalanced Optimal Transport, from theory to numerics},
  author={S{\'e}journ{\'e}, Thibault and Peyr{\'e}, Gabriel and Vialard, Fran{\c{c}}ois-Xavier},
  journal={arXiv preprint arXiv:2211.08775},
  year={2022}
}

@inproceedings{
    eyring2024unbalancedness,
    title={Unbalancedness in Neural Monge Maps Improves Unpaired Domain Translation},
    author={Luca Eyring and Dominik Klein and Th{\'e}o Uscidda and Giovanni Palla and Niki Kilbertus and Zeynep Akata and Fabian J Theis},
    booktitle={The Twelfth International Conference on Learning Representations},
    year={2024}
}

@article{uot1,
  title={Unbalanced optimal transport: Dynamic and Kantorovich formulations},
  author={Chizat, Lenaic and Peyr{\'e}, Gabriel and Schmitzer, Bernhard and Vialard, Fran{\c{c}}ois-Xavier},
  journal={Journal of Functional Analysis},
  volume={274},
  number={11},
  pages={3090--3123},
  year={2018},
  publisher={Elsevier}
}

@article{uot2,
  title={Optimal entropy-transport problems and a new Hellinger--Kantorovich distance between positive measures},
  author={Liero, Matthias and Mielke, Alexander and Savar{\'e}, Giuseppe},
  journal={Inventiones mathematicae},
  volume={211},
  number={3},
  pages={969--1117},
  year={2018},
  publisher={Springer}
}

@InProceedings{uot_semidual,
  title = 	 {Semi-Dual Unbalanced Quadratic Optimal Transport: fast statistical rates and convergent algorithm.},
  author =       {Vacher, Adrien and Vialard, Fran\c{c}ois-Xavier},
  booktitle = 	 {Proceedings of the 40th International Conference on Machine Learning},
  pages = 	 {34734--34758},
  year = 	 {2023},
  volume = 	 {202},
  series = 	 {Proceedings of Machine Learning Research},
  month = 	 {23--29 Jul},
  publisher =    {PMLR}
}

@inproceedings{otm,
  title={Generative Modeling with Optimal Transport Maps},
  author={Rout, Litu and Korotin, Alexander and Burnaev, Evgeny},
  booktitle={International Conference on Learning Representations},
  year={2022}
}

@article{
    fanTMLR,
    title={Neural Monge Map estimation and its applications},
    author={Jiaojiao Fan and Shu Liu and Shaojun Ma and Hao-Min Zhou and Yongxin Chen},
    journal={Transactions on Machine Learning Research},
    issn={2835-8856},
    year={2023},
    note={Featured Certification}
}

@inproceedings{uotm,
    title={Generative Modeling through the Semi-dual Formulation of Unbalanced Optimal Transport},
    author={Jaemoo Choi and Jaewoong Choi and Myungjoo Kang},
    booktitle={Thirty-seventh Conference on Neural Information Processing Systems},
    year={2023}
}

@inproceedings{
    not,
    title={Neural Optimal Transport},
    author={Alexander Korotin and Daniil Selikhanovych and Evgeny Burnaev},
    booktitle={The Eleventh International Conference on Learning Representations },
    year={2023}
}

@inproceedings{
    uot-upc,
    title={Unpaired Point Cloud Completion via Unbalanced Optimal Transport},
    author={Taekyung Lee and Jaemoo Choi and Jaewoong Choi and Myungjoo Kang},
    booktitle={Forty-second International Conference on Machine Learning},
    year={2025}
}

@inproceedings{uotmsd,
  title={Analyzing and Improving Optimal-Transport-based Adversarial Networks},
  author={Choi, Jaemoo and Choi, Jaewoong and Kang, Myungjoo},
  booktitle={The Twelfth International Conference on Learning Representations},
  year={2024}
}

@InProceedings{sjko,
  title = 	 {Scalable {W}asserstein Gradient Flow for Generative Modeling through Unbalanced Optimal Transport},
  author =       {Choi, Jaemoo and Choi, Jaewoong and Kang, Myungjoo},
  booktitle = 	 {Proceedings of the 41st International Conference on Machine Learning},
  year = 	 {2024},
  volume = 	 {235},
  series = 	 {Proceedings of Machine Learning Research},
  month = 	 {21--27 Jul},
  publisher =    {PMLR},
}

@inproceedings{
    diotm,
    title={Improving Neural Optimal Transport via Displacement Interpolation},
    author={Jaemoo Choi and Yongxin Chen and Jaewoong Choi},
    booktitle={The Thirteenth International Conference on Learning Representations},
    year={2025}
}

@inproceedings{
    u-notb,
    title={Robust Barycenter Estimation using Semi-Unbalanced Neural Optimal Transport},
    author={Milena Gazdieva and Jaemoo Choi and Alexander Kolesov and Jaewoong Choi and Petr Mokrov and Alexander Korotin},
    booktitle={The Thirteenth International Conference on Learning Representations},
    year={2025}
}

@inproceedings{FFHQ,
  title={A style-based generator architecture for generative adversarial networks},
  author={Karras, Tero and Laine, Samuli and Aila, Timo},
  booktitle={Proceedings of the IEEE/CVF conference on computer vision and pattern recognition},
  pages={4401--4410},
  year={2019}
}

@inproceedings{AFHQ,
  title={Stargan v2: Diverse image synthesis for multiple domains},
  author={Choi, Yunjey and Uh, Youngjung and Yoo, Jaejun and Ha, Jung-Woo},
  booktitle={Proceedings of the IEEE/CVF conference on computer vision and pattern recognition},
  pages={8188--8197},
  year={2020}
}

@inproceedings{tran2021explore,
  title={Explore image deblurring via encoded blur kernel space},
  author={Tran, Phong and Tran, Anh Tuan and Phung, Quynh and Hoai, Minh},
  booktitle={Proceedings of the IEEE/CVF conference on computer vision and pattern recognition},
  pages={11956--11965},
  year={2021}
}

@article{chung2022diffusion,
  title={Diffusion posterior sampling for general noisy inverse problems},
  author={Chung, Hyungjin and Kim, Jeongsol and Mccann, Michael T and Klasky, Marc L and Ye, Jong Chul},
  journal={arXiv preprint arXiv:2209.14687},
  year={2022}
}

@article{DDRM,
  title={Denoising diffusion restoration models},
  author={Kawar, Bahjat and Elad, Michael and Ermon, Stefano and Song, Jiaming},
  journal={Advances in neural information processing systems},
  volume={35},
  pages={23593--23606},
  year={2022}
}

@inproceedings{PGDM,
  title={Pseudoinverse-Guided Diffusion Models for Inverse Problems},
  author={Song, Jiaming and Vahdat, Arash and Mardani, Morteza and Kautz, Jan},
  booktitle={ICLR},
  year={2023}
}

@article{fathi2010optimal,
  title={Optimal transportation on non-compact manifolds},
  author={Fathi, Albert and Figalli, Alessio},
  journal={Israel Journal of Mathematics},
  volume={175},
  number={1},
  pages={1--59},
  year={2010},
  publisher={Springer}
}

@article{qayyum2022untrained,
  title={Untrained neural network priors for inverse imaging problems: A survey},
  author={Qayyum, Adnan and Ilahi, Inaam and Shamshad, Fahad and Boussaid, Farid and Bennamoun, Mohammed and Qadir, Junaid},
  journal={IEEE Transactions on Pattern Analysis and Machine Intelligence},
  volume={45},
  number={5},
  pages={6511--6536},
  year={2022},
  publisher={IEEE}
}

@incollection{engl2015regularization,
  title={Regularization of inverse problems},
  author={Engl, Heinz W and Ramlau, Ronny},
  booktitle={Encyclopedia of applied and computational mathematics},
  pages={1233--1241},
  year={2015},
  publisher={Springer}
}

@article{zhang2017beyond,
  title={Beyond a gaussian denoiser: Residual learning of deep cnn for image denoising},
  author={Zhang, Kai and Zuo, Wangmeng and Chen, Yunjin and Meng, Deyu and Zhang, Lei},
  journal={IEEE transactions on image processing},
  volume={26},
  number={7},
  pages={3142--3155},
  year={2017},
  publisher={IEEE}
}

@inproceedings{dong2014learning,
  title={Learning a deep convolutional network for image super-resolution},
  author={Dong, Chao and Loy, Chen Change and He, Kaiming and Tang, Xiaoou},
  booktitle={European conference on computer vision},
  pages={184--199},
  year={2014},
  organization={Springer}
}

@inproceedings{nah2017deep,
  title={Deep multi-scale convolutional neural network for dynamic scene deblurring},
  author={Nah, Seungjun and Hyun Kim, Tae and Mu Lee, Kyoung},
  booktitle={Proceedings of the IEEE conference on computer vision and pattern recognition},
  pages={3883--3891},
  year={2017}
}

@inproceedings{recht2019imagenet,
  title={Do imagenet classifiers generalize to imagenet?},
  author={Recht, Benjamin and Roelofs, Rebecca and Schmidt, Ludwig and Shankar, Vaishaal},
  booktitle={International conference on machine learning},
  pages={5389--5400},
  year={2019},
  organization={PMLR}
}

@inproceedings{bora2017compressed,
  title={Compressed sensing using generative models},
  author={Bora, Ashish and Jalal, Ajil and Price, Eric and Dimakis, Alexandros G},
  booktitle={International conference on machine learning},
  pages={537--546},
  year={2017},
  organization={PMLR}
}

@article{goh2019solving,
  title={Solving Bayesian inverse problems via variational autoencoders},
  author={Goh, Hwan and Sheriffdeen, Sheroze and Wittmer, Jonathan and Bui-Thanh, Tan},
  journal={arXiv preprint arXiv:1912.04212},
  year={2019}
}

@article{song2021solving,
  title={Solving inverse problems in medical imaging with score-based generative models},
  author={Song, Yang and Shen, Liyue and Xing, Lei and Ermon, Stefano},
  journal={arXiv preprint arXiv:2111.08005},
  year={2021}
}

@inproceedings{zhang2025improving,
  title={Improving diffusion inverse problem solving with decoupled noise annealing},
  author={Zhang, Bingliang and Chu, Wenda and Berner, Julius and Meng, Chenlin and Anandkumar, Anima and Song, Yang},
  booktitle={Proceedings of the Computer Vision and Pattern Recognition Conference},
  pages={20895--20905},
  year={2025}
}

@article{daras2024survey,
  title={A survey on diffusion models for inverse problems},
  author={Daras, Giannis and Chung, Hyungjin and Lai, Chieh-Hsin and Mitsufuji, Yuki and Ye, Jong Chul and Milanfar, Peyman and Dimakis, Alexandros G and Delbracio, Mauricio},
  journal={arXiv preprint arXiv:2410.00083},
  year={2024}
}

@article{otur,
  title={Optimal transport for unsupervised denoising learning},
  author={Wang, Wei and Wen, Fei and Yan, Zeyu and Liu, Peilin},
  journal={IEEE Transactions on Pattern Analysis and Machine Intelligence},
  volume={45},
  number={2},
  pages={2104--2118},
  year={2022},
  publisher={IEEE}
}

@InProceedings{rcot,
  title = 	 {Residual-Conditioned Optimal Transport: Towards Structure-Preserving Unpaired and Paired Image Restoration},
  author =       {Tang, Xiaole and Hu, Xin and Gu, Xiang and Sun, Jian},
  pages = 	 {47757--47777},
  year = 	 {2024},
  volume = 	 {235},
  series = 	 {Proceedings of Machine Learning Research},
  month = 	 {21--27 Jul},
  publisher =    {PMLR}
}

@article{OTP,
  title={Overcoming Spurious Solutions in Semi-Dual Neural Optimal Transport: A Smoothing Approach for Learning the Optimal Transport Plan},
  author={Choi, Jaemoo and Choi, Jaewoong and Kwon, Dohyun},
  journal={arXiv preprint arXiv:2502.04583},
  year={2025}
}

@article{lunz2018adversarial,
  title={Adversarial regularizers in inverse problems},
  author={Lunz, Sebastian and {\"O}ktem, Ozan and Sch{\"o}nlieb, Carola-Bibiane},
  journal={Advances in neural information processing systems},
  volume={31},
  year={2018}
}

@article{goujon2023neural,
  title={A neural-network-based convex regularizer for inverse problems},
  author={Goujon, Alexis and Neumayer, Sebastian and Bohra, Pakshal and Ducotterd, Stanislas and Unser, Michael},
  journal={IEEE Transactions on Computational Imaging},
  volume={9},
  pages={781--795},
  year={2023},
  publisher={IEEE}
}

@article{choi2024scalable,
  title={Scalable simulation-free entropic unbalanced optimal transport},
  author={Choi, Jaemoo and Choi, Jaewoong},
  journal={arXiv preprint arXiv:2410.02656},
  year={2024}
}

@article{zheng2026towards,
  title={Towards prospective medical image reconstruction via knowledge-informed dynamic optimal transport},
  author={Zheng, Taoran and Yang, Yan and Li, Xing and Gu, Xiang and Sun, Jian and Xu, Zongben},
  journal={Advances in Neural Information Processing Systems},
  volume={38},
  pages={40947--40979},
  year={2026}
}
\bibliographystyle{icml2026}

\newpage
\appendix
\onecolumn
\section{Existence of the OT Map: Twist Condition} \label{app:twist_condition}

In this section, we provide the theoretical foundations for the existence and uniqueness of the Optimal Transport (OT) map within our framework. We begin by defining the \textbf{twist condition}, a standard requirement in OT theory to ensure the existence of the optimal transport map.

\begin{definition}[Left twist condition]
Let $M$ and $N$ be a $n$-dimensional manifold and $N$ be a Polish space. Let $c:M \times N \rightarrow \mathbb{R}$ be a cost function and $\mu$ and $\nu$ be two probability measures on $M$ and $N$, respectively. For a given cost function $c(\yvar,\xvar)$, we define the \textit{skew left Legendre transform} as the partial map
\begin{align*}
    \Lambda_{c}^{l} : M \times N \rightarrow T^*M, \quad
    \Lambda_c^l (\yvar,\xvar) = \left( \yvar , \frac{\partial c }{\partial \yvar } (\yvar,\xvar) \right)
\end{align*}
whose domain of definition is 
\begin{align*}
    \mathcal{D}(\Lambda_c^l )  = \left\{ (\yvar,\xvar) \in M \times N : \frac{ \partial c }{ \partial \yvar } (\yvar,\xvar) \text{  exists} \right\} .
\end{align*}
We say that $c$ satisfies the \textit{left twist condition} if $\Lambda_c^l$ is injective on $\mathcal{D}(\Lambda_c^l )$.
\end{definition}

The following theorem establishes the conditions under which a deterministic optimal transport map is guaranteed to exist and be unique.

\begin{theorem}[\citep{fathi2010optimal}] \label{thm:existence}
Let $M$ be a smooth (second countable) manifold, and $N$ be a Polish space, and consider $\mu$ and $\nu$ (Borel) probability measures on $M$ and $N$ respectively. Assume that the cost $c: M \times N \rightarrow \mathbb{R}$ is lower semicontinuous and bounded from below. Moreover, assume that the following conditions hold:
\begin{enumerate}
    \item the family of maps $\yvar \mapsto c(\yvar,\xvar) = c_{\xvar} (\yvar)$ is locally semi-concave in $\yvar$ locally uniformly in $\xvar$,
    \item the cost $c$ satisfies the left twist condition,
    \item the measure $\mu$ gives zero mass to countably $(n-1)$-Lipschitz sets,
    \item the infimum in the Kantorovitch problem $C(\mu, \nu) = \arg \min _{\gamma \in \prod} \{ \int c(\yvar,\xvar)d\gamma \} $ is finite.
\end{enumerate}
Then there exists a borel map $T : M \rightarrow N$, which is an optimal transport map from $\mu$ to $\nu$ for the cost $c$. Moreover, the map $T$ is unique $\mu$-\textit{a.e.}, and any plan $\gamma_c \in \prod(\mu,\nu)$ optimal for the cost $c$ is concentrated on the graph of $T$.
\end{theorem}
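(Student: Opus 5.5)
The plan is the standard route: an optimal plan exists by compactness, a Kantorovich potential is locally semiconcave, and the left twist condition turns the first-order condition of that potential into a single-valued map. First, by hypothesis 4 the Kantorovich value $C(\mu,\nu)$ is finite. Because $c$ is lower semicontinuous and bounded below, the functional $\gamma\mapsto\int c\,d\gamma$ is lower semicontinuous for narrow convergence. The set $\Pi(\mu,\nu)$ is tight, since $M$ and $N$ are Polish, so by Prokhorov it is narrowly compact. Hence an optimal plan $\gamma_c$ exists.

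By the general theory for lower semicontinuous costs bounded below with finite optimal value (Villani, Thm.~5.10), every optimal plan is concentrated on a $c$-cyclically monotone set $\Gamma$. There is also a $c$-convex potential $\varphi(\yvar)=\sup_{\xvar}\bigl(\psi(\xvar)-c(\yvar,\xvar)\bigr)$ with $\Gamma\subset\partial_c\varphi$. Equivalently, for $(\yvar,\xvar)\in\Gamma$ the function $z\mapsto \varphi(z)+c(z,\xvar)$ attains its minimum at $z=\yvar$. Writing $-\varphi=\inf_{\xvar}\bigl(c(\cdot,\xvar)-\psi(\xvar)\bigr)$, the potential is an infimum of functions that, by hypothesis 1, are locally semiconcave with constants uniform in $\xvar$ on compacts. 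I therefore expect $-\varphi$ to be locally semiconcave on the open set $\Omega$ where $\varphi$ is finite and locally bounded.

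Second, I use the regularity of semiconcave functions. By Alberti/Zajíček, a locally semiconcave function on an $n$-manifold is differentiable outside a countably $(n-1)$-Lipschitz set. By hypothesis 3 this set is $\mu$-null. Fix a differentiability point $\yvar$ of $\varphi$ and some $\xvar$ with $(\yvar,\xvar)\in\Gamma$. The inequality
\[
c(z,\xvar)\ \ge\ c(\yvar,\xvar)-\varphi(z)+\varphi(\yvar)
\]
says that the semiconcave function $c(\cdot,\xvar)$ is touched from below at $\yvar$ by a function differentiable at $\yvar$. A semiconcave function with a differentiable lower support at a point is itself differentiable there, so $\partial_{\yvar}c(\yvar,\xvar)$ exists and equals $-\nabla\varphi(\yvar)$. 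Thus $(\yvar,\xvar)\in\mathcal D(\Lambda^l_c)$, and $\Lambda_c^l(\yvar,\xvar)=(\yvar,-\nabla\varphi(\yvar))$. By the left twist condition (hypothesis 2), $\xvar$ is uniquely determined, and I set $T(\yvar):=\xvar$. Borel measurability of $T$ follows because its graph is the intersection of a $\sigma$-compact version of $\Gamma$ with a Borel set. Hence $\gamma_c$ is concentrated on $\mathrm{Graph}(T)$ and $\gamma_c=(\mathrm{Id}\times T)_\#\mu$, so $T$ is an optimal map.

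For uniqueness, suppose $\gamma_1,\gamma_2$ are optimal plans. Then $\tfrac12(\gamma_1+\gamma_2)$ is also optimal, so by the argument above it is concentrated on the graph of a single map $T$. Each $\gamma_i$ is therefore concentrated on that same graph, which forces $\gamma_i=(\mathrm{Id}\times T)_\#\mu$ and $T$ unique $\mu$-a.e.

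The main obstacle is the step "$\varphi$ is finite and locally semiconcave on a set of full $\mu$-measure". With a non-compact $N$ and a merely lower semicontinuous cost, the potential may equal $\pm\infty$, or the infimum may fail to be locally bounded. Following Fathi–Figalli, I would handle this by localising. I cover $\Gamma$ by countably many pieces $\Gamma_k=\Gamma\cap(B_k\times K_k)$, with $B_k$ a coordinate ball and $K_k$ compact, chosen so that $\psi$ is bounded on $K_k$. On each piece I work with the partial potential $\varphi_k=\sup_{\xvar\in K_k}\bigl(\psi(\xvar)-c(\cdot,\xvar)\bigr)$. This function is finite, and $-\varphi_k$ is locally semiconcave by the uniform semiconcavity, and $\varphi_k\le\varphi$ with equality on $\Gamma_k$. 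The touching argument above then runs with $\varphi_k$ on each piece, and the countable union of exceptional sets remains $\mu$-null.
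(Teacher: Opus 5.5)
The paper gives no argument of its own here (it only cites Fathi--Figalli). Your outline is essentially the Fathi--Figalli argument, so the approach is right: an optimal plan from tightness and lower semicontinuity, a $c$-cyclically monotone support $\Gamma\subset\partial_c\varphi$, partial potentials $\varphi_k$ over compacts $K_k\subset N$ that are finite and locally semiconvex, differentiability off a countably $(n-1)$-Lipschitz set, and the twist condition to pin down $\xvar$.

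One step needs to be made explicit: consistency across pieces, since otherwise the twist condition only gives uniqueness of $\xvar$ within each piece and $T$ is not yet well defined. To fix this, drop the balls $B_k$ (each $\varphi_k$ is defined and semiconvex on all of $M$) and take the $K_k$ nested. Then for $\yvar$ in the projection of $\Gamma_k$ and $j>k$ you have $\varphi_k\le\varphi_j\le\varphi$ with equality at $\yvar$, which forces $\nabla\varphi_k(\yvar)=\nabla\varphi_j(\yvar)$ at common differentiability points.
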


\begin{proof}
See \citep{fathi2010optimal}.
\end{proof}

\subsection{Analysis of the UOTIP Cost Function}
We denote our combined cost function as $c(\yvar,\xvar;\lambda)= c_l(\yvar,\xvar) + \lambda c_q(\yvar,\xvar) $ for $\lambda \ge 0$. Note that under our mild assumption in Section \ref{sec:Background}, our formulation satisfies conditions 1, 3, and 4 of Theorem \ref{thm:existence}. However, satisfying the \textbf{left twist condition (condition 2}) is non-trivial for inverse problems.

To check the left twist condition, it is enough to show the injectivity of $ \frac{\partial }{\partial \yvar}c(\yvar,\xvar;\lambda)$ with respect to $\xvar$. 
However, for $\lambda =0$ (i.e., when only the likelihood-based cost term is used), the map $\xvar \mapsto \frac{\partial }{\partial \yvar}c(\yvar,\xvar;0) = 2(\yvar - \mathcal{A}(\xvar))$ is not injective due to the ill-posedness of $\mathcal{A}$.

By incorporating the quadratic cost $c_q$, we can restore injectivity and ensure the existence of a unique solver.

\begin{proposition}
Assume that the corruption operator $\mathcal{A}$ is $L$-Lipschitz continuous. Then the map $\xvar \mapsto \frac{\partial }{\partial \yvar}c(\yvar,\xvar;\lambda)$ is injective for all $\lambda > L$.
\end{proposition}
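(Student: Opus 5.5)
The plan is to compute the $\yvar$-gradient of the combined cost explicitly and reduce injectivity in $\xvar$ to a reverse-Lipschitz estimate for the map $\xvar \mapsto \lambda \xvar + \mathcal{A}(\xvar)$. Throughout I assume, as the quadratic term $c_q$ requires, that $\yvar$ and $\xvar$ live in the same Euclidean space $\mathbb{R}^n$. I also take $\mathcal{A}$ to be $L$-Lipschitz with respect to the Euclidean norm.

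First, I differentiate
$c(\yvar,\xvar;\lambda) = \|\mathcal{A}(\xvar)-\yvar\|_2^2 + \lambda\|\yvar-\xvar\|_2^2$ with respect to $\yvar$. This derivative exists everywhere and requires no differentiability of $\mathcal{A}$, since $\mathcal{A}(\xvar)$ enters only as a constant in $\yvar$. The result is
\begin{equation*}
\frac{\partial}{\partial \yvar} c(\yvar,\xvar;\lambda) = 2\bigl(\yvar - \mathcal{A}(\xvar)\bigr) + 2\lambda(\yvar - \xvar) = 2(1+\lambda)\yvar - 2\bigl(\mathcal{A}(\xvar) + \lambda \xvar\bigr).
\end{equation*}
Consequently, for fixed $\yvar$, the map $\xvar \mapsto \frac{\partial}{\partial \yvar}c(\yvar,\xvar;\lambda)$ is injective if and only if $F_\lambda(\xvar) := \mathcal{A}(\xvar) + \lambda\xvar$ is injective. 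The reason is that the two maps differ only by an additive constant $2(1+\lambda)\yvar$ and a nonzero scalar factor $-2$. Since $\mathcal{D}(\Lambda_c^l)$ is then all of $M\times N$, this also gives the left twist condition.

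Second, I prove that $F_\lambda$ is injective. Suppose $F_\lambda(\xvar_1) = F_\lambda(\xvar_2)$. Then $\lambda(\xvar_1 - \xvar_2) = \mathcal{A}(\xvar_2) - \mathcal{A}(\xvar_1)$. Taking norms and applying the Lipschitz bound gives
\begin{equation*}
\lambda\|\xvar_1-\xvar_2\|_2 \le L\|\xvar_1-\xvar_2\|_2 .
\end{equation*}
Hence $(\lambda - L)\|\xvar_1-\xvar_2\|_2 \le 0$, and $\lambda > L$ forces $\xvar_1 = \xvar_2$. More quantitatively, the same estimate gives $\|F_\lambda(\xvar_1)-F_\lambda(\xvar_2)\|_2 \ge (\lambda-L)\|\xvar_1-\xvar_2\|_2$, so the inverse of the gradient map is even Lipschitz on its image.

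I do not expect a genuine obstacle; the argument is essentially one line once the gradient is written down. The only points needing care are bookkeeping. One is that the quadratic term is well defined only when $\yvar$ and $\xvar$ share a dimension, which is why the super-resolution variant with $Q$ falls outside the result, as the paper's footnote notes. The other is that the Lipschitz constant is measured in the same norm used in $c_q$.
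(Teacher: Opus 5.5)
Your proof is correct and is essentially the paper's argument. You compute $\partial_{\yvar} c = 2(1+\lambda)\yvar - 2(\lambda\xvar + \mathcal{A}(\xvar))$, reduce injectivity to that of $\xvar \mapsto \lambda\xvar + \mathcal{A}(\xvar)$, and conclude from $\lambda\|\xvar_1-\xvar_2\| \le L\|\xvar_1-\xvar_2\|$ with $\lambda > L$; your extra bound $\|F_\lambda(\xvar_1)-F_\lambda(\xvar_2)\|_2 \ge (\lambda-L)\|\xvar_1-\xvar_2\|_2$ (via the reverse triangle inequality) is a valid but inessential strengthening.
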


\begin{proof} Note that the following equation holds:
\begin{equation}
    \frac{\partial }{\partial \yvar} c(\yvar,\xvar;\lambda) = 2( \yvar - \mathcal{A}(\xvar)) + 2 \lambda ( \yvar - \xvar ) = (2 + 2\lambda) \yvar - 2 ( \lambda \xvar + \mathcal{A}(\xvar)).
\end{equation}

Thus it is enough to show that $\lambda \xvar + \mathcal{A}(\xvar)$ is injective. Also, for any $\xvar_1, \xvar_2 \in \mathcal{X},$, $\lambda \xvar_1 + \mathcal{A}(\xvar_1) = \lambda \xvar_2 + \mathcal{A}(\xvar_2)$ implies that $\lambda \lVert \xvar_1 - \xvar_2 \rVert = \lVert \mathcal{A}(\xvar_1) - \mathcal{A}(\xvar_2) \lVert \le L\lVert \xvar_1 - \xvar_2 \lVert$. Thus letting $\lambda > L$, the above result implies that $\xvar_1 = \xvar_2$  and $\lambda \xvar + \mathcal{A}(\xvar)$ is injective.
\end{proof}

\section{Implementation Details} \label{app:implementation_detail}

All models introduced in this paper are trained for 60,000 iterations, and we report the best results with respect to FID, even if they occur at intermediate iterations.
\paragraph{Ours}
In our model, unless otherwise specified, the settings follow those of UOTM \citep{uotm} on CelebA-256. Our framework jointly learns the potential $v_\phi$ and the Optimal Transport Map $T_\theta$. For the potential $v_\phi$, we adopt the potential architecture from UOTM \citep{uotm}, while for OT Map $T_\theta$, we employ the generator architecture from OTUR \citep{otur}. The learning rate for the potential is $lr_{v_\phi} = 5.0 \times 10^{-5}$, and the learning rate for the OT Map is $lr_{T_\theta} = 1.0 \times 10^{-4}$. The cost intensity hyperparameter $\tau$ is fixed to 0.001. The batch size during training is fixed at 32. The convex conjugate $\Psi^*_i$ is derived from the generator function of the KL divergence, 
\begin{equation}
    \Psi(x) = \begin{cases} x \log x - x + 1, & \text{if } x > 0 \\ \infty, & \text{if } x \leq 0 \end{cases},
\end{equation}
which defines the associated $f$-divergence and yields the explicit form $\Psi^*_i(t) = e^t -1$. In the case of Ours-OT, we instead set $\Psi^*_i(\cdot) = Identity (\cdot)$, while keeping all other configurations identical to those of Ours.

\paragraph{Baselines} For NOT \citep{not}, we employ the generator and discriminator of UOTM and adopt its hyperparameter settings. The batch size is set to 32. For OTUR \citep{otur} and RCOT \citep{rcot}, we strictly follow all configurations as proposed in their original models.

\paragraph{Dataset and Evaluation} 
For FFHQ, we use 6,000 images from the original dataset as the clean signal, allocating 5,500 to the training set and 500 to the test set. For AFHQ, we use the original training set and the original test set as the clean signal. The measurements are generated by applying the forward operator (Eq. \ref{eq:inv_prob}) to these clean signals. Note that under the unpaired assumption, mini-batches of measurement $Y$ and clean signal $X$ are always independently sampled in Algorithm \ref{alg:uotip}. For evaluation, PSNR and SSIM are computed on the test set, whereas FID is evaluated using both the training and test sets.

\paragraph{Inverse Problem}
We follow the experimental settings of \citep{DDRM,PGDM} for super-resolution and \citep{zhang2025improving} for the other three tasks. 
Formally,
\begin{itemize}
    \item \textbf{Gaussian deblurring} (kernel size $61 \times 61$ and kernel standard deviation 3.0):
        \begin{equation}
            \yvar = \mathbf{k}*\xvar + \nvar, \; \nvar \sim \mathcal{N}(\mathbf{0}, \sigma_y^2 \mathbf{I}_{m}) 
        \end{equation}
        where $k$ is the Gaussian kernel and $*$ denotes the convolution operator.
    \item \textbf{Super-resolution} ($4 \times 4$ patch downsampling):
        \begin{equation}
            \yvar = \xvar \downarrow _{4} + \nvar, \; \nvar \sim \mathcal{N}(\mathbf{0}, \sigma_y^2 \mathbf{I}_{m})
        \end{equation}
    \item \textbf{High Dynamic Range (HDR) reconstruction} (scale factor $2.0$, clipping to $[-1,1]$):
        \begin{equation}
            \yvar = \text{clip}(2 \xvar , -1, 1)  + \nvar, \; \nvar \sim \mathcal{N}(\mathbf{0}, \sigma_y^2 \mathbf{I}_{m})
        \end{equation}
    \item \textbf{Nonlinear deblurring} ($\Acal$: neural operator from \citep{tran2021explore}):
        \begin{equation}
            \yvar = \mathcal{A}(\xvar) + \nvar, \; \nvar \sim \mathcal{N}(\mathbf{0}, \sigma_y^2 \mathbf{I}_{m}) \; \text{ for pretrained operator \ } \mathcal{A} 
        \end{equation}
        Here, $\mathcal{A}$ is a pretrained neural operator model on the GoPro dataset \citep{nah2017deep}, which learns to approximate the nonlinear blur characteristics observed in the dataset \citep{tran2021explore}.
\end{itemize}

\paragraph{Multi-level Observation Noise Dataset}
We construct a multi-level observation noise dataset from FFHQ by first applying a corruption operator and subsequently adding noise drawn from a mixture of four levels: $\sigma_{\yvar} \in \{0.025, 0.05, 0.1, 0.2\}$. For each noise level, we randomly and independently sample degraded images in proportions of $4:3:2:1$.
For evaluation, we generate a test set by applying all four noise levels to each of 500 FFHQ images, resulting in 2,000 degraded samples.

\begin{figure}[h]
    \centering
    \includegraphics[width=0.7\linewidth]{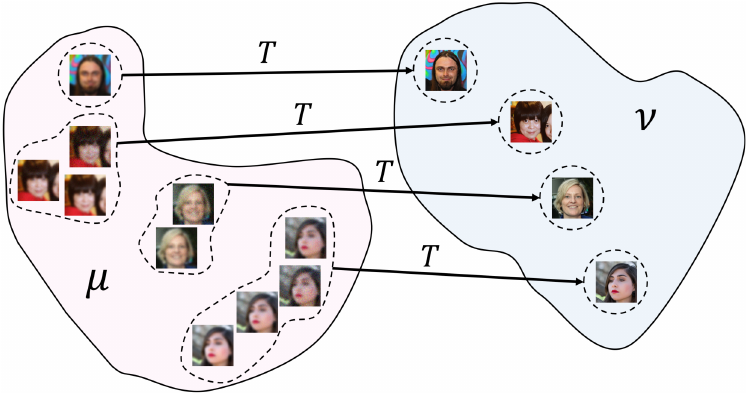}
    \caption{\textbf{Conceptual illustration of the multi-level observation noise problem}. A single target signal $\mathbf{x}$ may be observed under multiple Gaussian noise levels. Unlike the OT map, the UOT map can rescale each sample, enabling the alignment of multiple noisy observations with a single target signal. This flexibility makes our UOT-based model more robust to multi-level observation noise.
    }
    \label{fig:Multi-level noise}
    \vspace{-8pt}
\end{figure}

\paragraph{Diverse Noise Types Setting}
 For Laplace noise, the scale parameter is chosen such that its variance matches that of the Gaussian noise with $\sigma_{\yvar}=0.05$. For Poisson noise, we follow the experimental setup described in \citep{chung2022diffusion}. Formally,
\begin{itemize}
    \item \textbf{Laplace noise} (Laplace noise scale $b = \frac{{0.05}}{{\sqrt{2}}}$) :
        \begin{equation}
            \yvar = \mathcal{A}(\xvar) + \nvar, \; \nvar \sim \text{Laplace}(\textbf{0},b)
        \end{equation}
    \item \textbf{Poisson noise} (Poisson noise level $\lambda = 1.0$):
        \begin{equation}
            p(\mathbf{y}|\mathbf{x}_0) = \prod_{j=1}^{n} \frac{[A(\mathbf{x}_0)]_{j}^{y_j} \exp\left\{-[A(\mathbf{x}_0)]_{j}\right\}}{y_j!},
        \end{equation}
        where $j$ indexes the measurement bin, i.e., $j \in \{0, 1, \ldots, 255 \}$. To be more specific, the Poisson noise is defined on integer pixel values $[0, 255]$. Thus, each normalized image (range $[-1,1]$) is converted to 8-bit $[0,255]$, Poisson noise is applied, and the result is rescaled back to $[-1,1]$ to form the measurements.
\end{itemize}

\paragraph{Algorithm} Algorithm~\ref{alg:uotip} describes the training process used in UOTIP.
The generator $T_{\theta}$ learns to transport noisy measurements toward the target distribution by minimizing the cost term, while the discriminator $v_{\phi}$ is trained using the convex conjugate $\Psi^*_i (x) =  e^{x} - 1$. By alternating these updates, the model progressively refines the transport map.

\begin{algorithm}[t]
\caption{Training algorithm of UOTIP}
\begin{algorithmic}[1]
\REQUIRE The noisy measurement distribution $\mu$. The target image distribution $\nu$. $\Psi^*_i (t) =  e^{t} - 1$. Generator network $T_\theta$ and the discriminator network $v_\phi$. Total iteration number $K$.
\FOR{$k = 0, 1, 2 , \dots, K$}
    \STATE Sample a batch $Y\sim \mu$, $X\sim \nu$.
    \STATE $\mathcal{L}_{T} = \frac{1}{|Y|}\sum_{y\in Y} c\left(y, T_\theta(y)\right) - v_\phi \left(T_\theta(y)\right)$
    \STATE Update $\theta$ by minimizing the loss $\mathcal{L}_{T}$.
    \STATE $\mathcal{L}_{v} \!\!=\!\! \frac{1}{|Y|} \!\sum_{y\in Y} \Psi^*_1\left(-c\left(y, T_\theta(y)\right) + v_\phi \left(T_\theta(y)\right)\right) + \frac{1}{|X|} \sum_{x\in X} \Psi^*_2 (-v_\phi(x)) $
    \STATE Update $\phi$ by minimizing the loss $\mathcal{L}_{v}$.
\ENDFOR
\end{algorithmic}
\label{alg:uotip}
\end{algorithm}

\clearpage

\section{Related Work}
Inverse problems have been investigated from diverse approaches, ranging from traditional methods, such as hand-crafted priors and sparse coding, to deep learning techniques that learn complex image representations from large datasets.

Hand-crafted priors characterize natural images using predefined structures and simple statistical properties, such as sparsity \citep{candes2008introduction},  low-rank representations \citep{fazel2008compressed}, or total variation \citep{candes2006robust}. Although these methods do not require training data, they struggle to capture the complexity of natural image distributions and often fail to reliably distinguish realistic structures from unnatural artifacts. Sparse coding and dictionary learning partially mitigate this limitation by learning dictionaries from training data, but remain limited in capturing complex image structures \citep{qayyum2022untrained}.

With the success of deep learning, supervised approaches have achieved notable advances in many inverse problems, including image denoising \citep{zhang2017beyond}, super-resolution \citep{dong2014learning}, and motion deblurring \citep{nah2017deep}. These methods directly learn mappings from corrupted observations to clean ground-truth signals using paired datasets. However, these supervised methods rely on large collections of paired noisy/clean images, which are expensive or infeasible to obtain in many domains. Moreover, they often generalize poorly to out-of-distribution measurements \citep{recht2019imagenet}.

To address these challenges, unsupervised and self-supervised approaches have been proposed that eliminate the need for paired data by leveraging measurement consistency---enforcing that the reconstructed image $\hat{x}$, when passed through the forward operator $A$, reproduces the observed measurements $y$ \citep{ulyanov2018deep}. More recently, generative models have emerged as powerful priors, such as GANs \citep{bora2017compressed}, VAEs \citep{goh2019solving}, and diffusion models \citep{song2021solving, chung2022diffusion,zhang2025improving}. These models learn the distribution of natural images, enabling reconstruction by combining learned priors with measurement consistency. While these approaches achieve high perceptual quality, they often require expensive pretraining and remain sensitive to mismatches in noise assumptions. 

Among generative models, Neural Optimal Transport models \citep{fanTMLR, otm, sjko, uotm, diotm, choi2024scalable} are particularly relevant to our work.
The prior works \citep{not, rcot} adopted OT-map-based approaches for inverse problems. Our work is the first attempt to generalize the OT framework to the Unbalanced Optimal Transport (UOT) framework and introduce the likelihood cost function. 
While learned regularizer approaches \cite{lunz2018adversarial, goujon2023neural} share a superficial similarity with OT-map–based approaches \citep{not, rcot} on the training loss forms, their underlying formulations differ fundamentally.
\citet{lunz2018adversarial} incorporate a WGAN regularizer into the measurement loss, requiring a 1-Lipschitz discriminator and leading to a min–max optimization over the generator and discriminator. \citet{goujon2023neural} further adopt a proximal-gradient reconstruction scheme utilizing the learned regularizer.
In contrast, the OT-map-based approaches \citep{not, rcot} derive the training objective based on the semi-dual OT formulation. This results in a similar loss function \textbf{without Lipschitz constraints on the potential function, and the training becomes a max–min problem} that replaces the measurement loss with a quadratic cost function.
This difference leads to fundamentally different theoretical properties, including the existence and uniqueness of solutions \citep{OTP} and the interpretation of the learned potential \citep{fanTMLR}, and different practical performance \citep{uotmsd}. Importantly, the OT potential is defined between the \textbf{measurement distribution and the signal distribution}, whereas the GAN discriminator operates between the \textbf{true signal distribution and the generated distribution}, making the two notions conceptually and mathematically different.

In this paper, we introduce an inverse problem solver based on Unbalanced Optimal Transport (UOT). Unlike many existing methods, our method requires neither paired training data nor pretraining on large-scale natural image datasets. By incorporating a likelihood-based cost, our approach directly aligns reconstruction with the MAP estimate. Theoretically, the UOT framework relaxes the strict marginal-matching constraint, providing robustness to outliers, adaptability to multi-level observation noise and class imbalance, and generalization across diverse noise conditions. These properties make our method particularly suited for unpaired inverse problems, where only independent sets of noisy and clean samples are available. 

\section{Additional Qualitative Examples} 
\label{app:add_qual_example}

\subsection{Comparison of inverse problem solvers on AFHQ}
\label{app:inv_AFHQ}
\begin{figure}[h]
    \centering
    \includegraphics[width=0.7\linewidth]{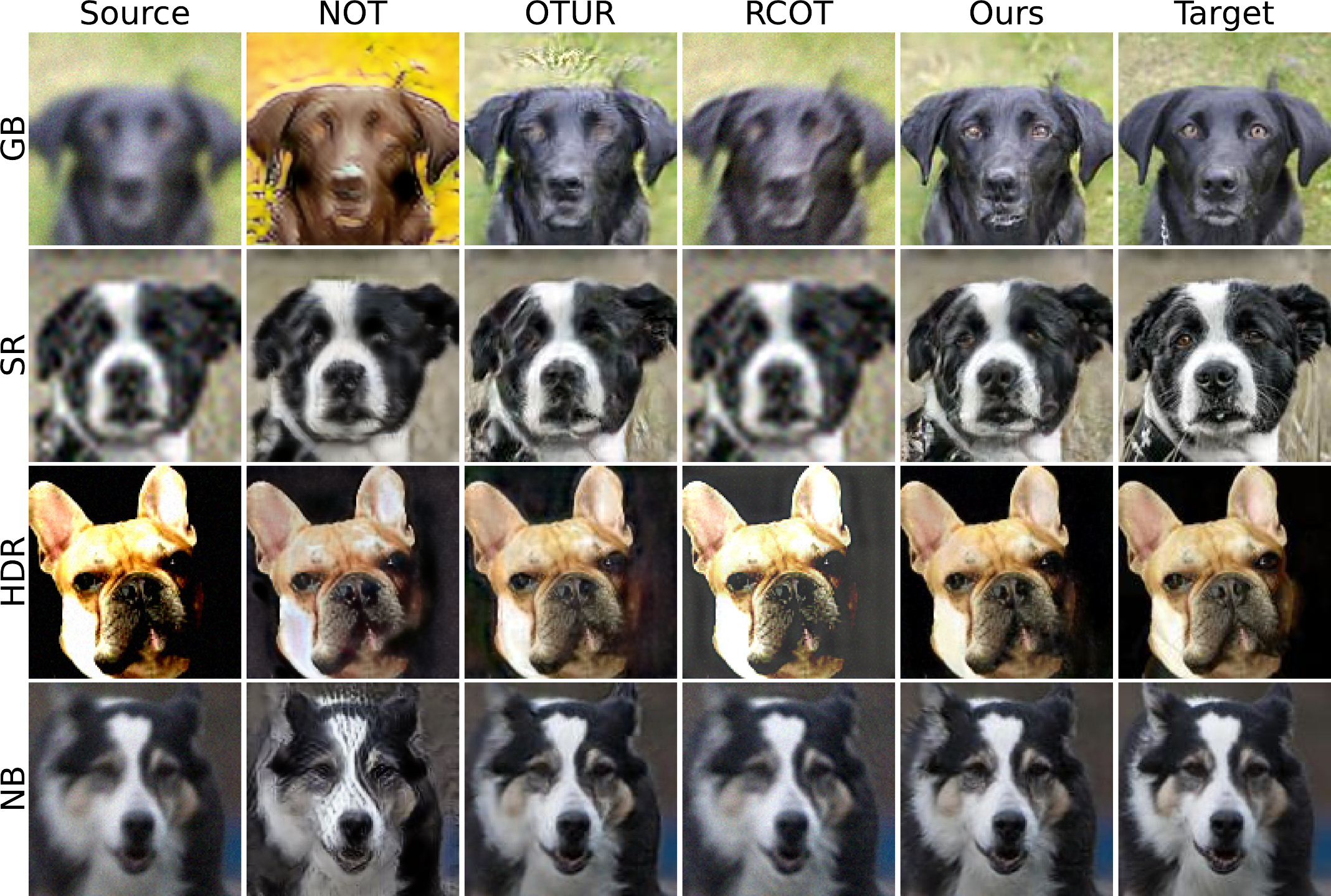}
    \caption{\textbf{Comparison of inverse problem solvers on AFHQ} for four tasks: Gaussian deblurring (GB), Super-resolution (SR), High dynamic range reconstruction (HDR), and Nonlinear deblurring (NB). Our model reconstructs images of superior fidelity with well-preserved structural details.
    }
    \label{fig:AFHQ-main}
\end{figure}

\clearpage

\subsection{Results for Various Inverse Problems}
\label{app:various_inv_prob}
\begin{figure}[h]
    \centering
    \includegraphics[width=0.9\linewidth]{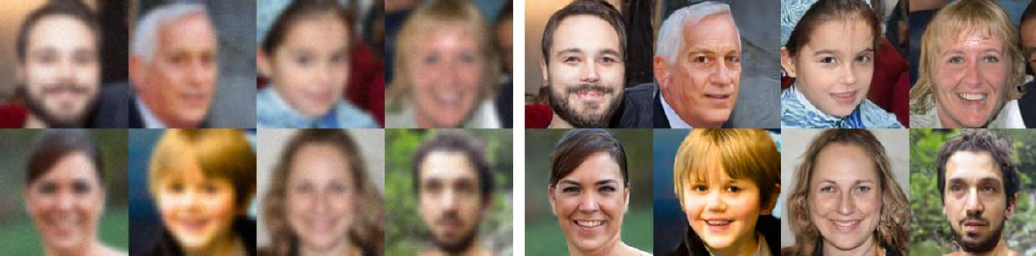}
    
    \caption{Additional qualitative results of our model for the Gaussian deblurring task on FFHQ (degraded (Left) $\rightarrow$ clean (Right)).
    }
    \label{fig:FFHQ-GB}
    \vspace{0.5cm}
\end{figure}

\begin{figure}[h]
    \centering
    \includegraphics[width=0.9\linewidth]{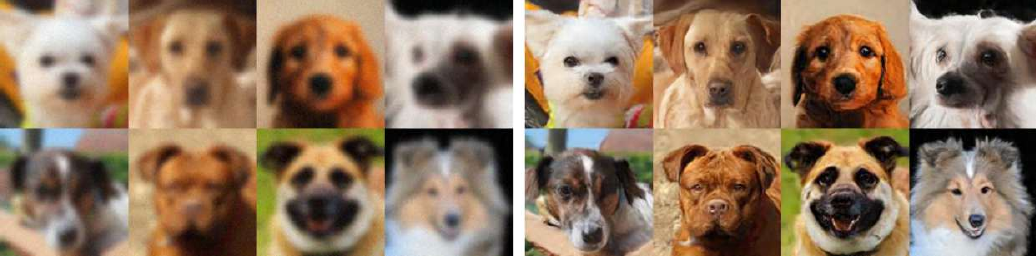}
        
    \caption{Additional qualitative results of our model for the Gaussian deblurring task on AFHQ (degraded (Left) $\rightarrow$ clean (Right)).
    }
    \label{fig:AFHQ-GB}
\end{figure}

\begin{figure}[h]
    \centering
    \includegraphics[width=0.9\linewidth]{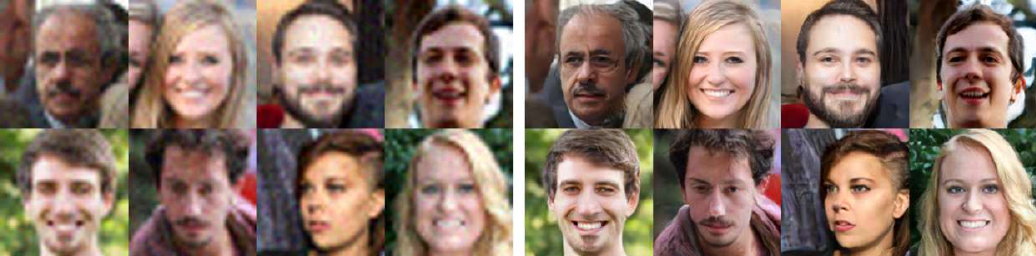}
        
    \caption{Additional qualitative results of our model for the super-resolution task on FFHQ (degraded (Left) $\rightarrow$ clean (Right)).
    }
    \label{fig:FFHQ-reconstruction-SR}
    \vspace{0.5cm}
\end{figure}

\begin{figure}[h]
    \centering
    \includegraphics[width=0.9\linewidth]{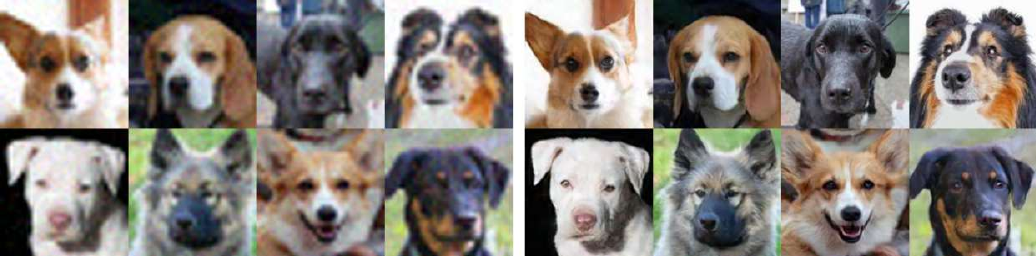}
    \caption{Additional qualitative results of our model for the super-resolution task on AFHQ (degraded (Left) $\rightarrow$ clean (Right)).
    }
    \label{fig:AFHQ-reconstruction-SR}
    \vspace{0.5cm}
\end{figure}

\begin{figure}[h]
    \centering
    \includegraphics[width=0.9\linewidth]{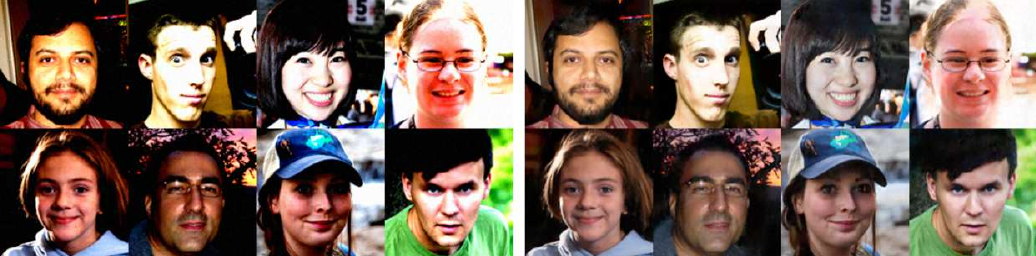}
    \caption{Additional qualitative results of our model for the HDR reconstruction task on FFHQ (degraded (Left) $\rightarrow$ clean (Right)).
    }
    \label{fig:FFHQ-HDR}
    \vspace{0.5cm}
\end{figure}

\begin{figure}[h]
    \centering
    \includegraphics[width=0.9\linewidth]{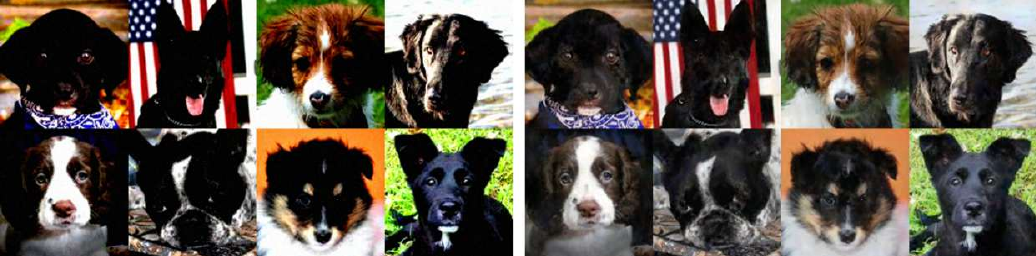}
    \caption{Additional qualitative results of our model for the HDR reconstruction task on AFHQ (degraded (Left) $\rightarrow$ clean (Right)).
    }
    \label{fig:AFHQ-HDR}
    \vspace{0.5cm}
\end{figure}

\begin{figure}[h]
    \centering
    \includegraphics[width=0.9\linewidth]{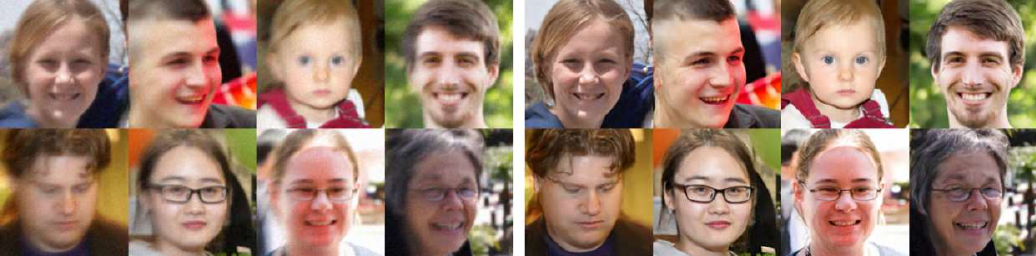}
    \caption{Additional qualitative results of our model for the nonlinear deblurring task on FFHQ (degraded (Left) $\rightarrow$ clean (Right)).
    }
    \label{fig:FFHQ-NB}
\end{figure}

\begin{figure}[h]
    \centering
    \includegraphics[width=0.9\linewidth]{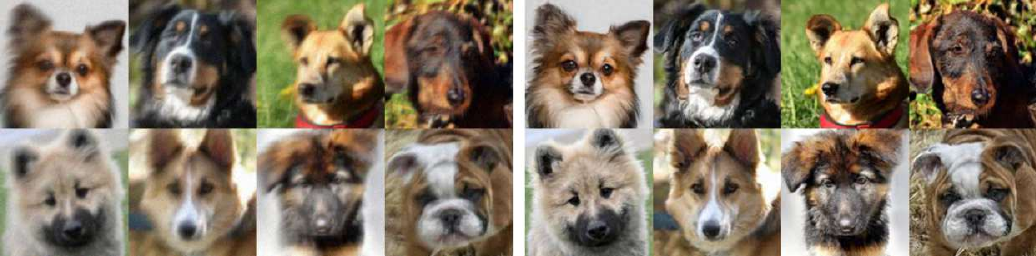}
    \caption{Additional qualitative results of our model for the nonlinear deblurring task on AFHQ (degraded (Left) $\rightarrow$ clean (Right)).
    }
    \label{fig:AFHQ-NB}
\end{figure}


\begin{figure}[h]
    \centering
    \includegraphics[width=0.9\linewidth]{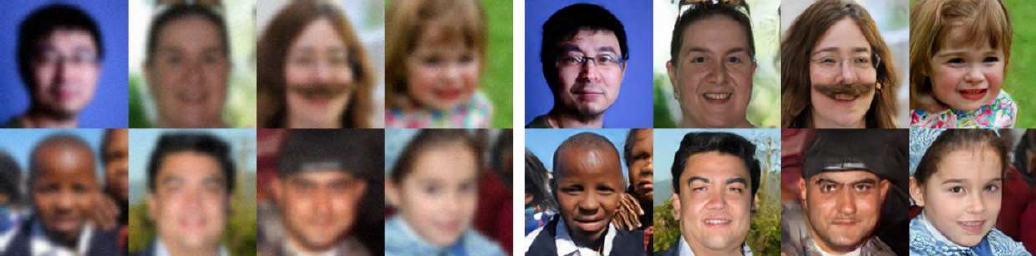}
    \caption{Qualitative results of our quadratic-cost-only variant for the Gaussian deblurring task on FFHQ (degraded (Left) $\rightarrow$ clean (Right)).
    }
    \label{fig:FFHQ-GB-quad-only}
\end{figure}

\begin{figure}[h]
    \centering
    \includegraphics[width=0.9\linewidth]{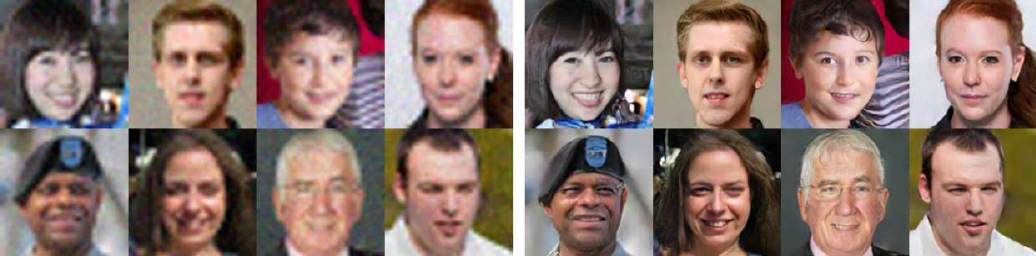}
    \caption{Qualitative results of our quadratic-cost-only variant for the super resolution task on FFHQ (degraded (Left) $\rightarrow$ clean (Right)).}
    \label{fig:FFHQ-SR-quad-only}
\end{figure}


\clearpage
\subsection{Results for Multi-level observation noise}
\label{app:multi_noise}

\begin{figure}[h]
    \centering
    \includegraphics[width=0.9\linewidth]{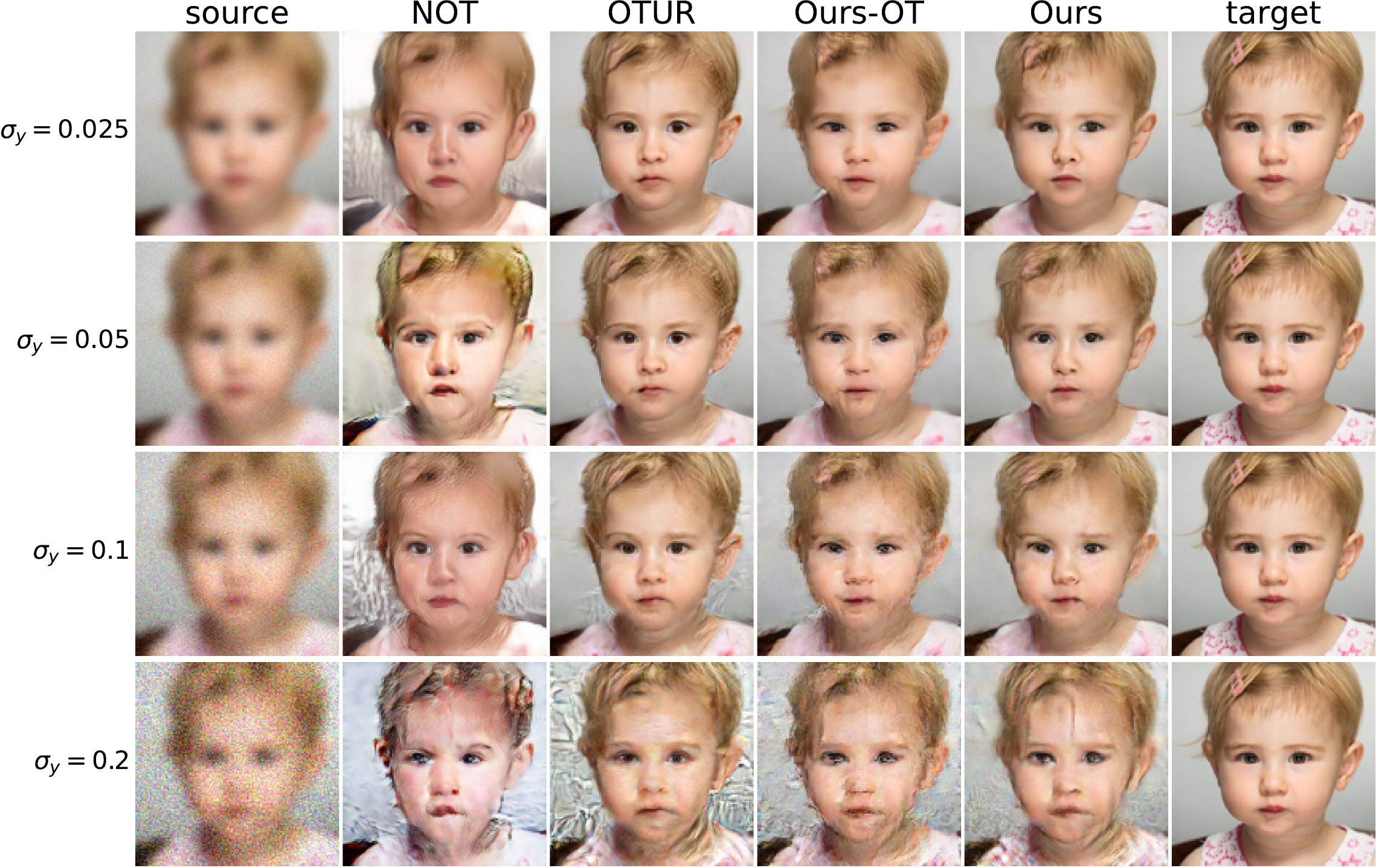}
    \caption{\textbf{Comparison of inverse problem solvers under multi-level observation noise} on FFHQ for the Gaussian deblurring. 
    }
    \label{fig:GB-total}
\end{figure}

\begin{figure}[h]
    \centering
    \includegraphics[width=0.9\linewidth]{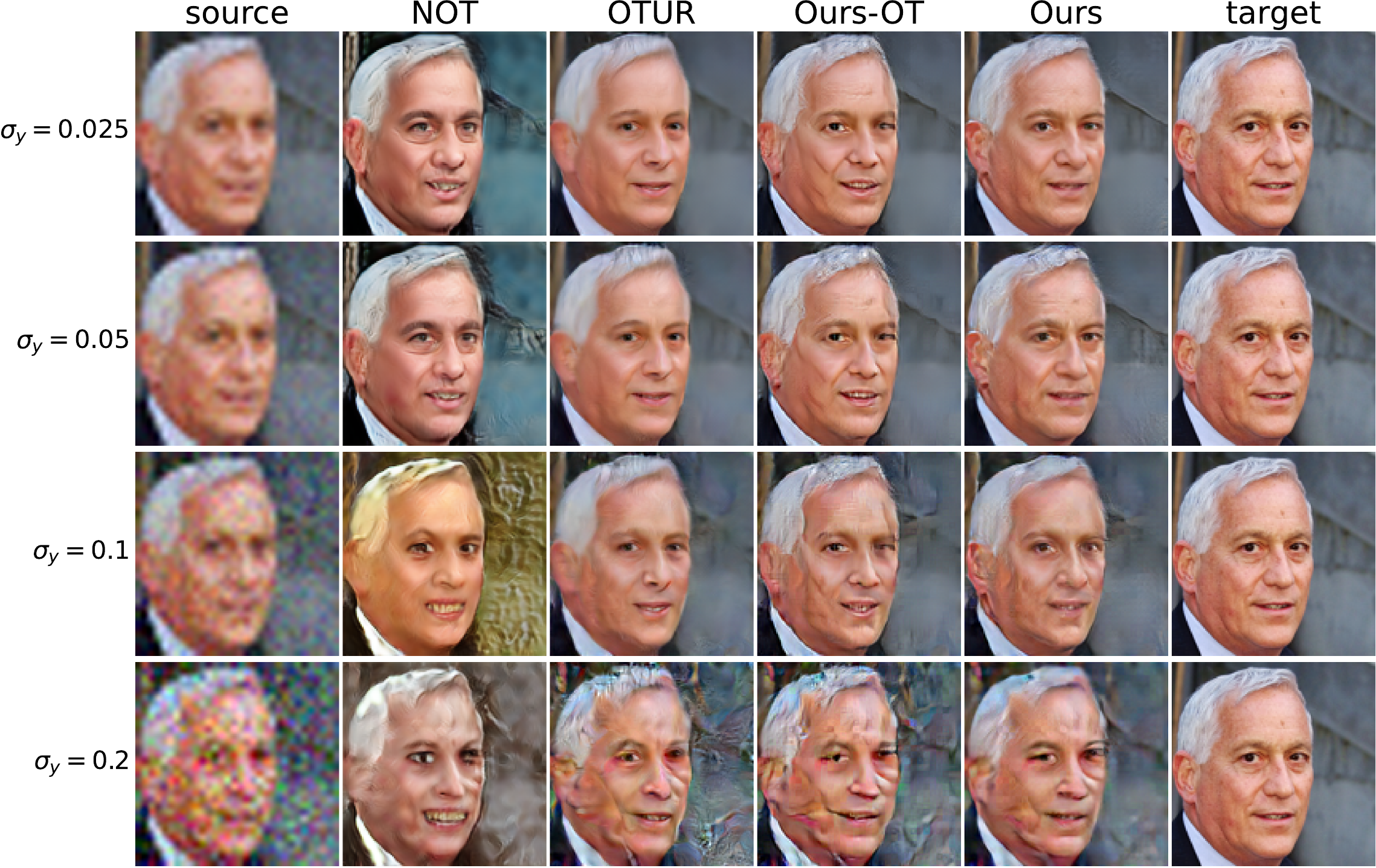}
    \caption{\textbf{Comparison of inverse problem solvers under multi-level observation noise} on FFHQ for the super-resolution. 
    }
    \label{fig:SR-total}
\end{figure}

\begin{figure}[t]
    \centering
    \includegraphics[width=0.9\linewidth]{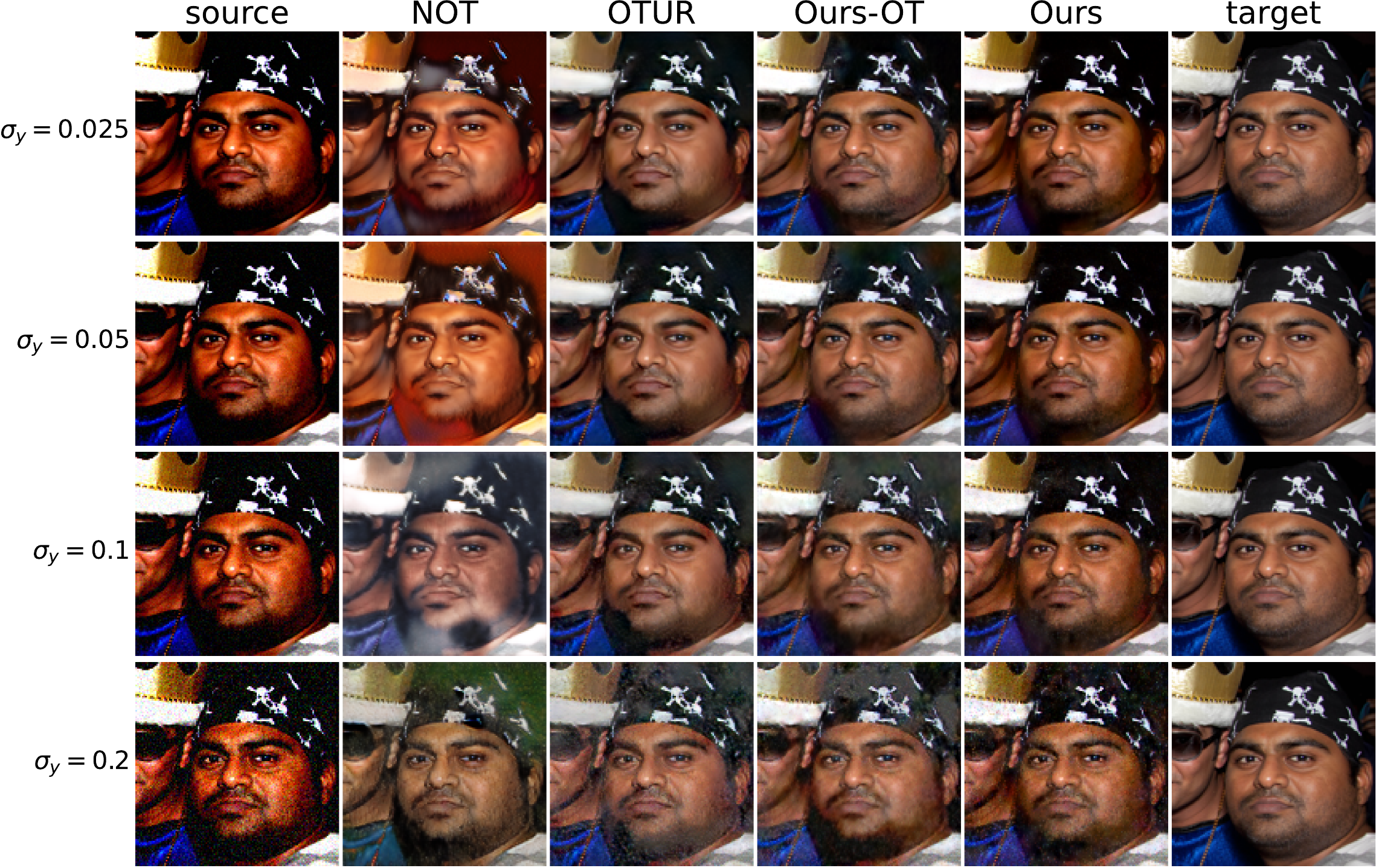}
    \caption{\textbf{Comparison of inverse problem solvers under multi-level observation noise} on FFHQ for the HDR reconstruction. 
    }
    \label{fig:HDR-total}
\end{figure}

\begin{figure}[t]
    \centering
    \includegraphics[width=0.9\linewidth]{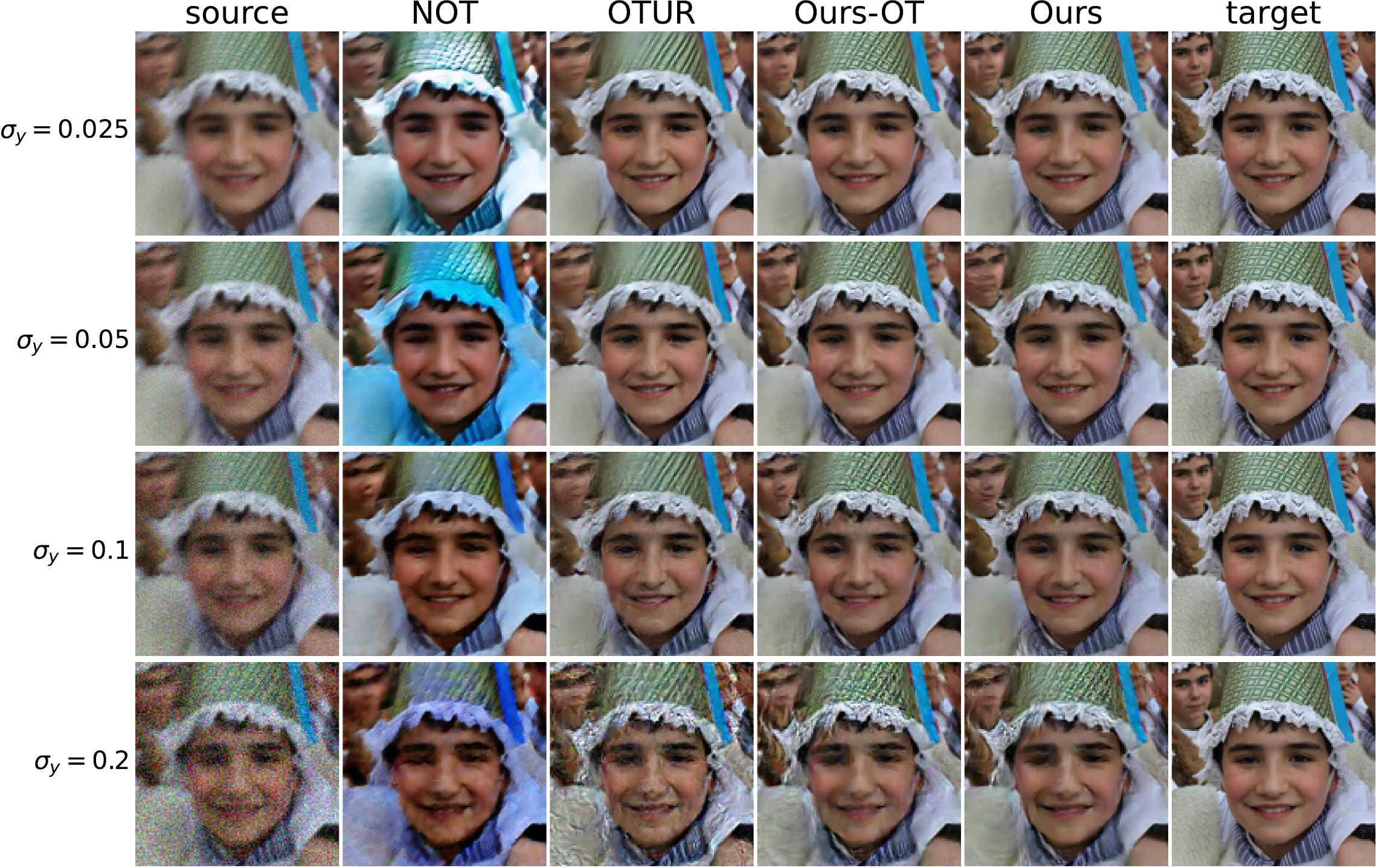}
    \caption{\textbf{Comparison of inverse problem solvers under multi-level observation noise} on FFHQ for the nonlinear deblurring.
    }
    \label{fig:NB-total}
\end{figure}

\clearpage

\section{Additional Quantitative Results} 
\label{app:add_quan_result}

\subsection{Discussion on the Cost Intensity Parameter $\tau$}
\label{cost_intens_param_ablation}

\paragraph{Theoretical condition}
In our implementation, we do not explicitly enforce the condition $\lambda > L$. Instead, the role of $\lambda$ in the theoretical analysis is reflected through the hyperparameter $\tau$ used in our cost function (Eq. \ref{eq:overall_cost}), which scales the intensity of the cost function. Although this strategy performs reliably in practice, investigating a more principled strategy for adapting $\tau$ remains an interesting direction for future work.

\paragraph{Results for cost intensity parameter $\tau$ ablation}
We perform an ablation study on the cost intensity hyperparameter $\tau$ for two linear inverse problems. On FFHQ dataset, we tested values $\tau \in 0.001 \times \{0.25, 0.5, 1, 2, 4\}$ (Fig. \ref{fig:tau_ablation}). Across both Gaussian deblurring and super-resolution, our model remains robust to the choice of $\tau$, except when $\tau$ is excessively large.  Specifically, metrics that directly compare with ground-truth signals (PSNR and SSIM) remain stable. On the other hand, the metric that measures marginal distribution-level fidelity (FID) degrades notably as $\tau$ increases. See Table \ref{tab:tau_able} for the full result table. To further examine sensitivity across datasets, we additionally conduct experiments on AFHQ for Gaussian deblurring. The results show that our method remains stable across datasets, with PSNR varying only marginally as $\tau$ changes.

\begin{figure}[h]
    \centering
    \includegraphics[width=0.5\linewidth]{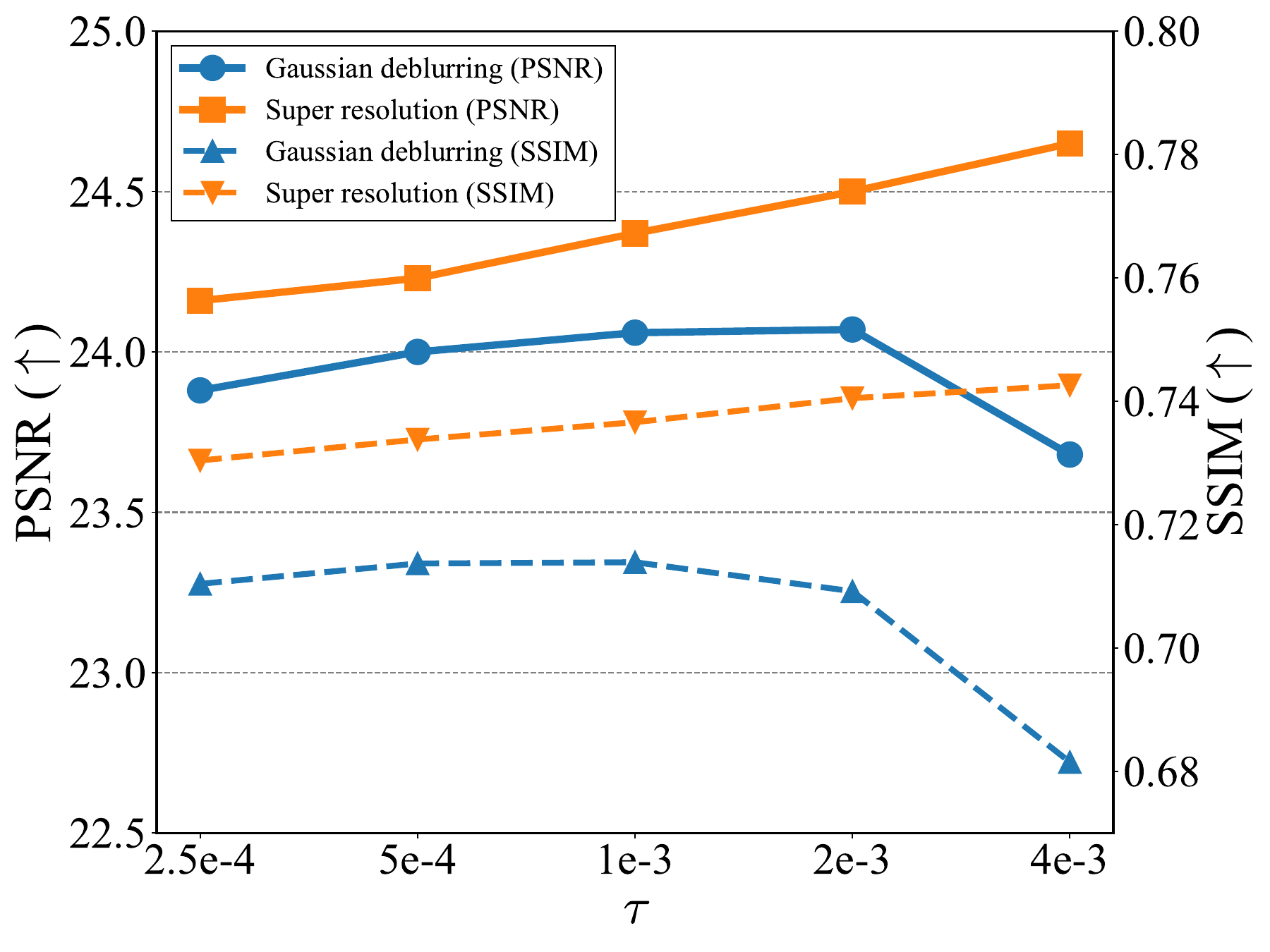}
    \caption{\textbf{Ablation study on the cost intensity parameter $\tau$}. For clarity, only PSNR ($\uparrow$) and SSIM ($\uparrow$) are visualized. 
    }
    \label{fig:tau_ablation}
\end{figure}

\begin{table}[h]
	\centering
    \caption{Ablation study on the cost intensity parameter $\tau$ with the FFHQ dataset. $\tau_0 = 1e-3$.
    }
    \label{tab:tau_able}
        \scalebox{0.9}{
		\begin{tabular}{lcccccccccc}
			\toprule
			\multirow{2}{*}{$\tau$}&\multicolumn{3}{c}{Gaussian Deblurring} &\multicolumn{3}{c}{Super Resolution} \\ \cmidrule(lr){2-4}  \cmidrule(lr){5-7}
			& PSNR ($\uparrow$) & SSIM ($\uparrow$) & FID ($\downarrow$) & PSNR ($\uparrow$) & SSIM ($\uparrow$) & FID ($\downarrow$)  \\
			\midrule
			$0.25 \times \tau_0$ & 23.88 & 0.7104 & \textbf{18.580} & 24.16 & 0.7304 & \textbf{18.708} \\
			$0.5 \times \tau_0$ & 24.00 & \underline{0.7137} &   \underline{19.093} & 24.23 & 0.7338 & \underline{18.824}  \\
                $\tau_0$ & \underline{24.06} & \textbf{0.7139} & 21.210 & 24.37 & 0.7366 & 20.234 \\
                 $2 \times \tau_0$ & \textbf{24.07} & 0.7092 & 30.739 & \underline{24.50} & \underline{0.7405} &  21.277 \\
                  $4 \times \tau_0$ & 23.68 & 0.6815 & 74.994  & \textbf{24.65} & \textbf{0.7426} &  30.281 \\
                
		      \bottomrule
	\end{tabular}
        }
\end{table}

\begin{table}[h]
    \centering
    \caption{Sensitivity analysis on the cost intensity parameter $\tau$ across datasets for Gaussian deblurring.}
    \label{tab:tau_sensitivity_gaussian_deblurring}
    \scalebox{0.85}{
    \begin{tabular}{cccc}
        \toprule
         Dataset & $\tau_0 = 0.001$ & PSNR ($\uparrow$) & FID ($\downarrow$) \\
        \midrule
         \multirow{3}{*}{AFHQ}
         & $\times 0.5$ & 24.18 & 10.652 \\
         & $\times 1$ & 24.22 & 12.566 \\
         & $\times 2$ & 24.26 & 19.328 \\
        \midrule
         \multirow{3}{*}{FFHQ}
         & $\times 0.5$ & 24.00 & 19.093 \\
         & $\times 1$ & 24.06 & 21.210 \\
         & $\times 2$ & 24.07 & 30.739 \\
        \bottomrule
    \end{tabular}
    }
\end{table}
\clearpage

\subsection{Quantitative Result Tables} \label{app:add_quan_table}

\begin{table}[h]
    \caption{\textbf{Quantitative results under multi-level observation noise} for four inverse problems on FFHQ. Our model exhibits superior robustness across noise levels.
    }
    \label{tab:multi_noise}
                
	
    \begin{subtable}{\textwidth}
        \caption{Nonlinear inverse problems: High Dynamic Range and Nonlinear Deblurring.
        }
        \label{tab:multi_noise_nonlinear}
        \centering
        \scalebox{0.7}{
		\begin{tabular}{lcccccccccc}
			\toprule
			\multirow{2}{*}{Method}&\multicolumn{3}{c}{High Dynamic Range} &\multicolumn{3}{c}{Nonlinear Deblurring} \\ \cmidrule(lr){2-4}  \cmidrule(lr){5-7}
			& PSNR ($\uparrow$) & SSIM ($\uparrow$) & FID ($\downarrow$) & PSNR ($\uparrow$) & SSIM ($\uparrow$) & FID ($\downarrow$)  \\
			\midrule
			NOT \citep{not} & 21.01 & 0.7728 & 62.641 & 21.50 & 0.7201 & 76.795 \\
			OTUR \citep{otur} & {24.29} & {0.8098} & {52.149} & {25.23} & {0.7763} & {61.244}  \\
                \midrule
            OTIP (Ours-OT) &  \textbf{25.92}  & \textbf{0.8488} & \textbf{49.750} & \underline{26.90} & \underline{0.8307} & \underline{51.805}   \\
                UOTIP (Ours) & \underline{25.44} & \underline{0.8330} & \underline{51.300}& \textbf{27.25} & \textbf{0.8427} &\textbf{43.450} \\
                
		      \bottomrule
    	\end{tabular}
            }
    	
    \end{subtable}
 \vspace{-0.2cm}
\end{table}

\begin{table}[h]
\centering
\caption{\textbf{Class imbalance results under imbalance ratio $k$} between AFHQ-cat and AFHQ-dog for the Gaussian deblurring.}
\label{tab:class_imbalance}
\scalebox{0.85}{
\begin{tabular}{clccc}
\toprule
{Ratio} & {Model} &  PSNR ($\uparrow$) & SSIM ($\uparrow$) & FID ($\downarrow$) \\
\midrule
\multirow{4}{*}{1}
    & NOT & 20.28 & 0.5567 & 50.166  \\
    & OTUR & 23.13 & \underline{0.6477}  & 28.626  \\
    \cmidrule(lr){2-5}
    & OTIP (Ours-OT) & \underline{23.15} & 0.6373 &  \underline{28.182} \\
    & UOTIP (Ours) & \textbf{23.42} & \textbf{0.6523} & \textbf{21.558}  \\
\midrule
\multirow{4}{*}{2}
    & NOT & 19.10 & 0.5073 & 60.739 \\
    & OTUR & \underline{22.92} & \underline{0.6435} & \underline{32.429} \\
    \cmidrule(lr){2-5}
    & OTIP (Ours-OT) & 22.64  & 0.6190 & 34.047  \\
    & UOTIP (Ours) & \textbf{23.29}  & \textbf{0.6438}  & \textbf{25.689} \\
\midrule
\multirow{4}{*}{3}
    & NOT & 18.35 & 0.4977 & 66.060 \\
    & OTUR & 22.19  & \underline{0.6182} & \underline{36.812}  \\
    \cmidrule(lr){2-5}
    & OTIP (Ours-OT) & \underline{22.50} & 0.6128 & 66.842  \\
    & UOTIP (Ours) &  \textbf{23.04} & \textbf{0.6315} & \textbf{31.630}  \\
\midrule
\multirow{4}{*}{4}
    & NOT & 18.11  & 0.4674 & 58.237  \\
    & OTUR & 20.92  & 0.5671 & \underline{40.651} \\
    \cmidrule(lr){2-5}
    & OTIP (Ours-OT) & \underline{22.42} & \underline{0.6045} & 78.869 \\
    & UOTIP (Ours) & \textbf{22.67} & \textbf{0.6155} & \textbf{39.894} \\
\bottomrule
\end{tabular}
}
\end{table}

\newpage
\begin{table}[h]
\centering
\caption{\textbf{Quantitative results under diverse noise types} in linear inverse problems on the FFHQ dataset.}
\label{tab:blind_noise_type}
\resizebox{0.95\textwidth}{!}{%
\begin{tabular}{clccccccccc}
\toprule
\multirow{2}{*}{Ratio} & \multirow{2}{*}{Method} & \multicolumn{3}{c}{{Gaussian Deblurring}} & \multicolumn{3}{c}{{Super Resolution $4\times$}} \\
\cmidrule(lr){3-5} \cmidrule(lr){6-8}
 &  & PSNR ($\uparrow$) & SSIM ($\uparrow$) & FID ($\downarrow$) & PSNR ($\uparrow$) & SSIM ($\uparrow$) & FID ($\downarrow$) \\
\midrule
\multirow{3}{*}{Gaussian noise} & NOT & 20.11 & 0.6035 & 52.901 & 20.13 & 0.6257 & 50.066 \\
 & OTUR & \underline{23.82} & \underline{0.7106} & \underline{24.337} & \underline{24.09} & \underline{0.7243} & \underline{22.751} \\
 & UOTIP (Ours) & \textbf{24.06} & \textbf{0.7139} & \textbf{21.210} & \textbf{24.35} & \textbf{0.7371} & \textbf{19.475} \\
\midrule
\multirow{4}{*}{Laplace noise} & NOT & 19.50 & 0.5810 & 57.921 & 20.41 & 0.6277 & 50.228 \\
 & OTUR & 23.63 & 0.7074 & \underline{23.825} & 23.72  & 0.7252 & \underline{23.931} \\
 & UOTIP (Ours) & \underline{23.98} & \underline{0.7153} & \textbf{21.846} & \textbf{24.04} & \textbf{0.7350} & \textbf{19.629} \\
 \cmidrule{2-8}
 & UOTIP (Ours-Laplace's likelihood) & \textbf{24.12} & \textbf{0.7204} & 25.342 & \underline{23.99} & \underline{0.7317} & 27.053 \\
\midrule
\multirow{4}{*}{{Poisson noise}} & NOT & 19.15 & 0.5584 & 55.368 & 19.28 & 0.5820 & 57.606 \\
 & OTUR & \underline{23.14} & \underline{0.6919} & \underline{25.840} & \underline{23.26} & \underline{0.7020} & \underline{26.222} \\
 & UOTIP (Ours) & \textbf{23.47} & \textbf{0.6938} & \textbf{23.669} & \textbf{23.59} & \textbf{0.7163} & \textbf{20.396} \\
 \cmidrule{2-8}
 & UOTIP (Ours-Poisson likelihood) & 18.06 & 0.4295 & 290.053  &   22.56   &	0.6075 & 175.036  \\
\bottomrule 
\end{tabular}%
}
\end{table}
We conducted experiments using the likelihood cost functions for Laplace and Poisson noise in Table \ref{tab:blind_noise_type}.
\begin{itemize}
    \item For Laplace noise, we used the L1 likelihood: 
    \begin{equation}
        c_{l,\mathrm{laplace}}(\mathbf{y}, \mathbf{x})
= -\left\|\mathbf{y} - A\mathbf{x}\right\|_1
    \end{equation}
    \item For Poisson noise, we used the standard scaled quadratic approximation (as in \citet{chung2022diffusion}):
    \begin{equation}
        c_{l,\mathrm{poisson}}(\mathbf{y}, \mathbf{x})
= -\left\|\mathbf{y} - A(\mathbf{x})\right\|_{\Lambda}^{2},
\qquad
[\Lambda]_{ii} \triangleq \frac{1}{2y_i},
\qquad
\left\|\mathbf{a}\right\|_{\Lambda}^{2} \triangleq \mathbf{a}^{T}\Lambda \mathbf{a}.
    \end{equation}
\end{itemize}
For the Poisson-likelihood experiments reported in Table \ref{tab:blind_noise_type}, we adopt the standard scaled quadratic approximation, following \citet{chung2022diffusion}. Under this approximation, the Poisson-likelihood variant performs worse than the Gaussian-likelihood variant. This behavior is consistent with the numerical instability often observed in Poisson likelihood objectives, as also discussed in Appendix C.4 of \citet{chung2022diffusion}. By contrast, under Laplace noise, the Laplace-likelihood variant achieves competitive performance. These observations indicate that our framework is not inherently tied to the Gaussian likelihood.

\begin{table}[!h]
	\centering
	\caption{Ablation study on the cost function $c(\yvar, \xvar)$, investigated on the FFHQ dataset}
    \label{tab:abl_cost}
        \scalebox{0.65}{
		\begin{tabular}{lcccccccccccc}
			\toprule
			& \multicolumn{2}{c}{cost term}&\multicolumn{3}{c}{Gaussian Deblurring} &\multicolumn{3}{c}{Super Resolution 4$\times$ } \\ \cmidrule(lr){4-6}  \cmidrule(lr){7-9}
			& Quad term & IP term & PSNR ($\uparrow$) & SSIM ($\uparrow$) & FID ($\downarrow$) & PSNR ($\uparrow$) & SSIM ($\uparrow$) & FID ($\downarrow$)  \\
			\midrule
		  \multirow{2}{*}{} &\checkmark & & \underline{24.01} & 0.7130 &  \underline{21.516} & \underline{24.29} & 0.7332 & 20.131 \\
			Ours&& \checkmark & 23.98 & \textbf{0.7140} & 25.608 & 24.22 & \underline{0.7354} & \textbf{19.085}  \\
			    
            &   \checkmark & \checkmark & \textbf{24.06} & \underline{0.7139} & \textbf{21.210} & \textbf{24.35} & \textbf{0.7371} & \underline{19.475} \\ \midrule
            OTUR \citep{otur} &    &  & 23.82 & 0.7106 & 24.337 & 23.91 & 0.6777 & 30.773 \\
		      \bottomrule
	\end{tabular}
        }
\end{table}

\clearpage
\section{Additional Empirical Comparisons}

To further strengthen the empirical evaluation, we conducted additional experiments by comparing our method with DPS \citep{chung2022diffusion} and the more recent OT-based method KIDOT \citep{zheng2026towards}.

DPS is a relevant diffusion-based baseline that can be compared under our setting. By contrast, many other diffusion-based methods, such as DDRM \citep{DDRM} and DAPS \citep{zhang2025improving}, require additional knowledge of the measurement noise, in particular the noise intensity 
, which makes a fair direct comparison with our setting inappropriate. KIDOT, on the other hand, is designed for medical image reconstruction tasks such as MRI and CT, and relies on the adjoint of the operator, which restricts it to linear inverse problems. Accordingly, we additionally included comparisons on linear inverse problems.

As shown in Table \ref{tab:additional_empirical_comparisons_ffhq}, our method outperforms both DPS and KIDOT across all metrics. In particular, the large gap between DPS's test-set FID and full-dataset FID suggests memorization. These comparisons with both a non-OT baseline and a more recent OT-based method further validate the empirical effectiveness of our approach.

\begin{table}[h]
    \centering
    \caption{Additional empirical comparisons on FFHQ: two linear (Gaussian deblurring, Super-resolution) and two nonlinear (HDR Reconstruction, Nonlinear Deblurring). 
    The \textbf{boldface} values indicate the best performance, except for FID-Total, which is reported only for reference.
    }
    \label{tab:additional_empirical_comparisons_ffhq}
    \scalebox{0.9}{
    \begin{tabular}{cccccc}
        \toprule
        Task & Method & PSNR ($\uparrow$) & SSIM ($\uparrow$) & FID-Test ($\downarrow$) & FID-Total ($\downarrow$) \\
        \midrule

        \multirow{3}{*}[-0.5ex]{\shortstack{Super-resolution \\ \\ 4$\times$}}
         & DPS \citep{chung2022diffusion} 
         & 17.00 & 0.4250 & 78.440 & 4.573 \\
         
         & KIDOT \citep{zheng2026towards} 
         & 22.17 & 0.5528 & 200.633 & 155.541 \\
         
         \cmidrule(lr){2-6}
         & UOTIP (Ours) 
         & \textbf{24.35} & \textbf{0.7371} & \textbf{52.273} & 19.475 \\

        \midrule

        \multirow{3}{*}[-0.5ex]{\shortstack{Gaussian \\ \\ Deblurring}}
         & DPS \citep{chung2022diffusion} 
         & 18.91 & 0.4769 & 73.788 & 4.341 \\
         
         & KIDOT \citep{zheng2026towards} 
         & 21.98 & 0.5361 & 239.004 & 190.314 \\
         
         \cmidrule(lr){2-6}
         & UOTIP (Ours) 
         & \textbf{24.06} & \textbf{0.7139} & \textbf{55.869} & 21.210 \\

        \midrule

        \multirow{2}{*}[-0.5ex]{\shortstack{HDR \\ \\ Reconstruction}}
         & DPS \citep{chung2022diffusion} 
         & 19.38 & 0.5440 & 79.358 & 4.886 \\

         \cmidrule(lr){2-6}
         & UOTIP (Ours) 
         & \textbf{26.02} & \textbf{0.8642} & \textbf{45.823} & 20.840 \\

        \midrule

        \multirow{2}{*}[-0.5ex]{\shortstack{Nonlinear \\ \\ Deblurring}}
         & DPS \citep{chung2022diffusion} 
         & 19.01 & 0.4938 & 73.414 & 4.387 \\

         \cmidrule(lr){2-6}
         & UOTIP (Ours) 
         & \textbf{28.52} & \textbf{0.8841} & \textbf{36.974} & 11.370 \\

        \bottomrule
    \end{tabular}
    }
    \vspace{-8pt}
\end{table}

\end{document}